\newcommand*{\addFileDependency}[1]{
  \typeout{(#1)}
  \@addtofilelist{#1}
  \IfFileExists{#1}{}{\typeout{No file #1.}}
}
\newcommand*{\myexternaldocument}[1]{%
    \externaldocument{#1}%
    \addFileDependency{#1.tex}%
    \addFileDependency{#1.aux}%
}
\newcommand{\R}{\mathbb{R}}
\newcommand{\matr}[1]{\boldsymbol{#1}}
\newcommand\norm[1]{\left\lVert#1\right\rVert}
\renewcommand{\vec}[1]{\mathbf{#1}}
\DeclareMathOperator{\argmin}{arg\, min}
\DeclareMathOperator{\sigmamin}{\sigma_{min}}
\def\R{\mathbb{R}}
\def\e{\vec{e}}
\def\x{\vec{x}}
\def\y{\vec{y}}
\def\matC{\matr{C}}
\def\matR{\matr{R}}
\def\matA{\matr{A}}
\def\matI{\matr{I}}
\def\matP{\matr{P}}
\def\matY{\matr{Y}}
\def\matZ{\matr{Z}}
\def\matW{\matr{W}}
\def\matG{\matr{G}}
\def\matX{\matr{X}}
\def\matM{\matr{M}}
\def\matB{\matr{B}}
\def\matU{\matr{U}}
\def\matQ{\matr{Q}}
\def\matV{\matr{V}}
\def\matS{\matr{S}}
\def\matE{\matr{E}}
\newtheorem{lem}{Lemma}
\newtheorem{thm}{Theorem}
\newtheorem{defn}{Definition}
\newtheorem{cor}{Corollary}
\begin{document}
\title{NoisyCUR: An algorithm for two-cost budgeted matrix completion}
%
%
\author{Dong Hu\inst{1} \and
Alex Gittens\inst{1} \and
Malik Magdon-Ismail\inst{1}}
\authorrunning{D. Hu et al.}
\institute{Anonymous}
\institute{Rensselaer Polytechnic Institute, Troy, NY 12180, USA \\
\email{\{hud3,gittea\}@rpi.edu, magdon@cs.rpi.edu}}
\maketitle              
\begin{abstract}
Matrix completion is a ubiquitous tool in machine learning and data analysis. Most work in this area has focused on the number of observations necessary to obtain an accurate low-rank approximation. In practice, however, the cost of observations is an important limiting factor, and experimentalists may have on hand multiple modes of observation with differing noise-vs-cost trade-offs. This paper considers matrix completion subject to such constraints: a budget is imposed and the experimentalist's goal is to allocate this budget between two sampling modalities in order to recover an accurate low-rank approximation. Specifically, we consider that it is possible to obtain low noise, high cost observations of individual entries or high noise, low cost observations of entire columns. We introduce a regression-based completion algorithm for this setting and experimentally verify the performance of our approach on both synthetic and real data sets. When the budget is low, our algorithm outperforms standard completion algorithms. When the budget is high, our algorithm has comparable error to standard nuclear norm completion algorithms and requires much less computational effort.

\keywords{Matrix Completion \and Low-rank Approximation \and Nuclear Norm Minimization.}
\end{abstract}

\section{Introduction}

Matrix completion (MC) is a powerful and widely used tool in machine learning, finding applications in information retrieval, collaborative filtering, recommendation systems, and computer vision. The goal is to recover a matrix $\matA \in \R^{m \times n}$ from only a few, potentially noisy, observations $\y \in \R^{d}$, where $d \ll mn$.

In general, the MC problem is ill-posed, as many matrices may give rise to the same set of observations. Typically the inversion problem is made feasible by assuming that the matrix from which the observations were generated is in fact low-rank, $\text{rank}(\matA) = r \ll \min\{m,n\}$. In this case, the number of degrees of freedom in the matrix is $(n+m)r$, so if the observations are sufficiently diverse, then the inversion process is well-posed.

In the majority of the MC literature, the mapping from the matrix to its observations, although random, is given to the user, and the aim is to design algorithms that minimize the sample complexity under these observation models. Some works have considered modifications of this paradigm, where the user designs the observation mapping themselves in order to minimize the number of measurements needed~\cite{YCHEN,Cur+,krishnamurthy2014power}.

\begin{figure}[!ht]%
    \centering
    \includegraphics[width=0.8\linewidth]{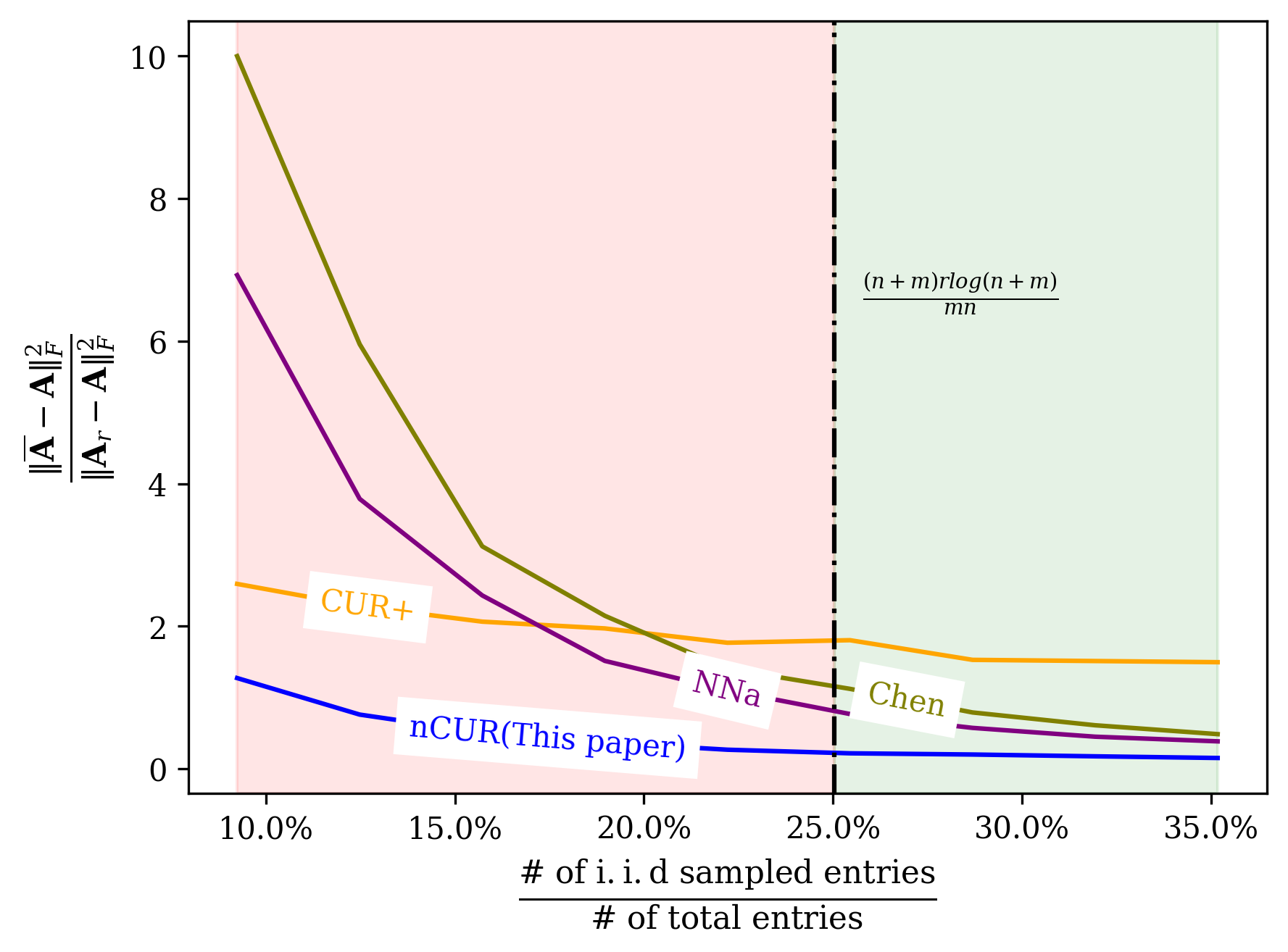}
    \caption{noisyCUR, a regression-based MC algorithm (CUR+), and two nuclear norm MC algorithms applied to the budgeted MC problem on a synthetic incoherent matrix $\matr A \in \R^{80\times 60}$. See Section~\ref{sec:empirical} for more details on the experimental setup. The performance of each method at each budget level is averaged over 10 runs, with hyper-parameters selected at each value of $d$ by cross-validation. Within the red region the budget is small enough that not enough entries can be sampled for nuclear norm methods to have theoretical support. In this regime, noisyCUR significantly outperforms the baseline algorithms. }%
    \label{fig:compare}%
    \end{figure}
    
This paper considers a budgeted variation of noisy matrix completion with competing observation models: the goal is to allocate a finite budget between the observation models to maximize the accuracy of the recovery. This setup is natural in the context of experimental design: where, given competing options, the natural goal of an experimenter is to spend their budget in a way that maximizes the accuracy of their imputations. Specifically, this work considers a two-cost model where either entire columns can be observed with high per-entry error but low amortized per-entry cost, or entries can be observed with low per-entry error but high per-entry cost. This is a natural model in, for instance, recommender systems applications, where one has a finite budget to allocate between incentivizing users to rate either entire categories of products or individual items. The former corresponds to inexpensive, high-noise measurements of entire columns and the latter corresponds to expensive, low-noise measurements of individual entries. 

The noisyCUR algorithm is introduced for this two-cost budgeted MC problem, and guarantees are given on its recovery accuracy. Figure~\ref{fig:compare} is illustrative of the usefulness of noisyCUR in this setting, compared to standard nuclear norm MC approaches and another regression-based MC algorithm. Even in low-budget settings where nuclear norm matrix completion is not theoretically justifiable, noisyCUR satisfies relative-error approximation guarantees. Empirical comparisons of the performance of noisyCUR to that of nuclear norm MC approaches on a synthetic dataset and on the Jester and MovieLens data sets confirm that noisyCUR can have significantly lower recovery error than standard nuclear norm approaches in the budgeted setting. Additionally, noisyCUR tolerates the presence of coherence in the row-space of the matrix, and is fast as its core computational primitive is ridge regression.

The rest of the paper is organized as follows. In Section~\ref{sec:problem} we introduce the two-cost model and discuss related works. Section~\ref{sec:algorithm} introduces the noisyCUR algorithm and provides performance guarantees; most proofs are deferred to the supplement. Section~\ref{sec:empirical} provides an empirical comparison of the performance of noisyCUR and baseline nuclear norm completion algorithms that demonstrates the superior performance of noisyCUR in the limited budget setting. Section~\ref{sec:conclusion} concludes the work.

\section{Problem Statement}
\label{sec:problem}
\subsection{Notation.} 
Throughout this paper, scalars are denoted by lowercase letters ($n$), vectors by bolded lowercase letters ($\x$), and matrices by bolded uppercase letters $\matA$. The spectral, Frobenius, and nuclear norms of $\matA \in \R^{m \times n}$ are written $\|\matA\|_2$, $\|\matA\|_F$, and $\|\matA\|_\star$ respectively, and its singular values are ordered in a decreasing fashion: $\sigma_1(\matA) \geq \cdots \geq \sigma_{\min\{m, n\}}(\matA)$. The smallest nonzero singular value of $\matA$ is denoted by $\sigma_{\text{min}}(\matA).$ The condition number of $\matA$ is taken to be the ratio of the largest singular value and the smallest \emph{nonzero} singular value, $\kappa_2(\matA)=\sigma_1(\matA)/\sigma_{\text{min}}(\matA).$ The orthogonal projection onto the column span of $\matA$ is denoted by $\matP_{\matA}$. Semi-definite inequalities between the positive-semidefinite matrices $\matA$ and $\matB$ are written, e.g., as $\matA \preceq \matB$. The standard Euclidean basis vectors are written as $\e_1$, $\e_2$, and so on.

\subsection{Problem formulation}

Given a limited budget $B$ with which we can pay to noisily observe a matrix $\matA \in \R^{m \times n}$, our goal is to construct a low-rank matrix $\overline{\matr A}$ that approximates $\matA$ well. There are two modes of observation: very noisy and cheap samples of entire columns of $\matA$, or much less noisy but expensive samples of individual entries of $\matA$.

The following parameters quantify this two-cost observation model:
\begin{itemize}
    \item Each low-noise sample of an individual entry costs $p_e$.
    \item Each high-noise sample of a column costs $p_c > 0$. Because columns are cheap to sample, $p_c < m p_e$.
    \item The low-noise samples are each observed with additive $\mathcal{N}(0, \sigma_e^2)$ noise.
    \item Each entry of the high-noise column samples is observed with additive $\mathcal{N}(0, \sigma_c^2)$ noise. Because sampling columns is noisier than sampling entries, $\sigma_c^2 > \sigma_e^2$.
\end{itemize}

\subsection{Related Work}

To the best of the authors' knowledge, there is no prior work on budgeted low-rank MC. Standard approaches to matrix completion, e.g.~\cite{recht2011simpler,keshavan2009matrix,hastie2015matrix} assume that the entries of the matrix are sampled uniformly at random, with or without noise. The most related works in the MC literature concern adaptive sampling models that attempt to identify more informative observations to reduce the sample complexity~\cite{YCHEN,krishnamurthy2014power,Cur+,krishnamurthy2013low,balcan2016noise}. One of these works is~\cite{YCHEN}, which estimates non-uniform sampling probabilities for the entries in $\matA$ and then samples according to these to reduce the sample complexity. The work~\cite{krishnamurthy2014power} similarly estimates the norms of the columns of $\matA$ and then samples entries according to these to reduce the sample complexity. The work~\cite{Cur+} proposes a regression-based algorithm for noiseless matrix completion that uses randomly sampled columns, rows, and entries of the matrix to form a low-rank approximation.

\section{The noisyCUR algorithm }
\label{sec:algorithm}

 \begin{algorithm}[t!]
 	\begin{algorithmic}[1]
 		\Require $d$, the number of column samples;
 		$s$, the number of row samples; $\sigma_e$, the noise level of the column samples; $\sigma_c$, the noise level of the row samples; and $\lambda$, the regularization parameter
 		\Ensure $\overline{\matA}$, approximation to $\matA$
		\State $\tilde{\matC} \gets \matC + \matE_c$, where the $d$ columns of $\matC$ are sampled uniformly at random with replacement from $\matA$, and the entries of $\matE_c$ are i.i.d. $\mathcal{N}(0, \sigma_c^2)$.
 		\State $\matU \gets $ an orthonormal basis for the column span of $\tilde{\matC}$
		\State $\bm{\ell}_i \gets \tfrac{1}{2} \|\e_i^T \matU\|_2^2/\|\matU\|_F^2 + \tfrac{1}{2m}$ for $i = 1, \ldots, m$
		\State $\matS \gets \text{SamplingMatrix}(\bm{\ell}, m, s)$, the sketching matrix\footnotemark~used to sample $s$ rows of $\matA$
		\State $\matY = \matS^T \matA + \matE_e$, where the entries of $\matE_e$ are i.i.d. $\mathcal{N}(0, \sigma_e^2)$.
		\State $\matX \gets \argmin_{\matZ} \|\matY - \matS^T \tilde{\matC} \matZ\|_F^2 + \lambda \|\matZ\|_F^2$
		\State $\overline{\matA} \gets \tilde{\matC}\matX$
		\State \Return $\overline{\matA}$
	\end{algorithmic}
	\caption{noisyCUR algorithm for completion of a low-rank matrix $\matA \in \R^{m \times n}$}
	 \label{alg:noisycur}
 \end{algorithm}
 \footnotetext{$\text{SamplingMatrix}(\bm{\ell}, m, s)$ returns a matrix $\matS \in \R^{m \times s}$ such that $\matS^T\matA$ samples and rescales, i.i.d.~with replacement, $s$ rows of $\matA$ with probability proportional to their shrinked leverage scores. See the supplement for details. } 
The noisyCUR (nCUR) algorithm, stated in Algorithm~\ref{alg:noisycur}, is a regression-based MC algorithm. The intuition behind noisyCUR is that, if the columns of $\matA$ were in general position and $\matA$ were exactly rank-$r$, then one could recover $\matA$ by sampling exactly $r$ columns of $A$ and sampling $r$ entries from each of the remaining columns, then using regression to infer the unobserved entries of the partially observed columns. 

noisyCUR accomodates the presence of noise in the column and entry observations, and the fact that $\matA$ is not low-rank. It samples $d$ noisy columns from the low-rank matrix $\matA \in \R^{m \times n}$, with entry-wise error that has variance $\sigma_c^2$ and collects these columns in the matrix $\tilde{\matC} \in \R^{m \times d}$. It then uses $\tilde{\matC}$ to approximate the span of $\matA$ and returns an approximation of the form $\overline{\matA} = \tilde{\matC} \matX$.

The optimal coefficient matrix $\matX$ would require $\matA$ to be fully observed. The sample complexity is reduced by instead sampling $s$ rows noisily from $\matA$, with entry-wise error that has variance $\sigma_e^2$, to form the matrix $\matY$. These rows are then used to estimate the coefficient matrix $\matX$ by solving a ridge-regression problem; ridge regression is used instead of least-squares regression because in practice it better mitigates the noisiness of the observations. The rows are sampled according to the shrinked leverage scores \cite{ma2015statistical,wang2018sketched} of the rows of $\tilde{\matC}$.

The cost of observing the entries of $\matA$ needed to form the approximation $\overline{\matA}$ is $d p_c + snp_e$, so the number of
column observations $d$ and the number of row observations $s$ are selected to use the entire budget, $B = d p_c + s n p_e$.

Our main theoretical result is that when $\matA$ is column-incoherent, has low condition number and is sufficiently dense, noisyCUR can exploit the two sampling modalities to achieve additive error guarantees in budget regimes where the relative error guarantees of nuclear norm completion do not apply.

First we recall the definition of column-incoherence.

\begin{defn} If $\matA \in \R^{m \times n}$ and $\matV \in \R^{m \times r}$ is an orthonormal basis for the row space of $\matA$, then the column leverage scores of $\matA$ are given by 
	\[
		\ell_i = \|\e_i^T \matV\|_2^2
		\quad \text{ for } i=1,\ldots,n.
	\]
The column coherence of $\matA$ is the maximum of the column leverage scores of $\matA$, and $\matA$ is said to have a $\beta$-incoherent column space if its column coherence is smaller than $\beta \frac{r}{n}$.
\end{defn}

\begin{thm}
\label{thm:noisycur_guarantee}
Let $\matA \in \R^{m \times n}$ be a rank-$r$ matrix with $\beta$-incoherent column space. Assume that $\matA$ is dense: there is a $c > 0$ such that at least half \footnote{This fraction can be changed, with corresponding modification to the sample complexities $d$ and $s$.} of the entries of $\matA$ satisfy $|a_{ij}| \geq c$.

Fix a precision parameter $\varepsilon \in (0, 1)$ and invoke Algorithm~\ref{alg:noisycur} with 
\[
d \geq \max\left\{ \frac{6 + 2 \varepsilon}{3\varepsilon^2} \beta r \log \frac{r}{\delta},  \frac{8(1 + \delta)^2}{c^2(1 - \varepsilon)\varepsilon} r \kappa_2(\matA)^2 \sigma_c^2\right\} 
\text{ and }
s \geq \frac{6 + 2 \varepsilon}{3\varepsilon^2} 2d \log \frac{d}{\delta}.
\]
The returned approximation satisfies
\[
\|\matA - \overline{\matA}\|_F^2 \leq \left(\gamma + \varepsilon + 40 \frac{\varepsilon}{1 - \varepsilon}\right) \|\matA\|_F^2 + 12 \varepsilon \left(\frac{\sigma_e^2}{\sigma_c^2}\right) d \sigma_r^2(\matA)
\]
with probability at least $0.9 - 2\delta - 2\exp(\tfrac{-(m-r)\delta^2}{2}) - \exp(\tfrac{-sn}{32})$. Here,
\[
\gamma \leq 2 \left(\frac{1 + \varepsilon}{1 - \varepsilon}\right) \left[ \frac{\lambda}{(1 + \varepsilon) \left( \frac{1}{2}\sqrt{m-r} - \sqrt{d} \right)^2\sigma_c^2 + \lambda}\right]^2.
\]
\end{thm}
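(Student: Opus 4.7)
The plan is to split the error $\matA - \tilde{\matC}\matX$ into a ``column-noise'' piece and a ``regression'' piece, and then bound each. Since $\matA$ is $\beta$-incoherent and $d = \Omega(\beta r \log(r/\delta))$, a matrix Chernoff argument shows that with high probability the column sample $\matC$ spans $\text{col}(\matA)$. Hence there is a coefficient matrix $\matZ^{\star} := \matC^{+}\matA$ satisfying $\matC\matZ^{\star} = \matA$, and writing $\tilde{\matC} = \matC + \matE_c$ yields the decomposition
\[
\matA - \tilde{\matC}\matX \;=\; -\matE_c\matZ^{\star} + \tilde{\matC}(\matZ^{\star}-\matX),
\]
so it suffices to control (i) $\|\matE_c\matZ^{\star}\|_F$ and (ii) the error of the ridge-regression estimate $\matX$ relative to $\matZ^{\star}$.

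For (i), I would use the density hypothesis ($|a_{ij}|\geq c$ on half the entries) to get $\|\matA\|_F^2 \geq \tfrac{mn}{2}c^2$, which combined with $\|\matA\|_F^2 \leq r\kappa_2(\matA)^2\sigma_r^2(\matA)$ lower-bounds $\sigma_r(\matA)$. Uniform column sampling then concentrates $\sigma_r(\matC)$ near $\sqrt{d/n}\,\sigma_r(\matA)$, and the hypothesis $d \gtrsim r\kappa_2(\matA)^2\sigma_c^2/c^2$ forces $\sigma_r(\matC)$ to dominate $\|\matE_c\|_2 = O((\sqrt{m}+\sqrt{d})\sigma_c)$. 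That in turn bounds $\|\matZ^{\star}\|_F$ in terms of $\|\matA\|_F$, which controls the first term of the decomposition.

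For (ii), the shrinked-leverage sampling with $s = \Omega(d\log(d/\delta)/\varepsilon^2)$ rows yields the $(1\pm\varepsilon)$-subspace embedding $\tilde{\matC}^T\matS\matS^T\tilde{\matC} \approx \tilde{\matC}^T\tilde{\matC}$, so the sketched ridge objective matches the unsketched one up to $(1\pm\varepsilon)$ factors. The additive row noise $\matE_e$ then contributes the $\varepsilon(\sigma_e^2/\sigma_c^2)d\sigma_r^2(\matA)$ term through the variance of the ridge estimator. To obtain the $\gamma$ bias I would decompose $\tilde{\matC}$ along $\text{col}(\matA)$ and its orthogonal complement; the transverse part is $\matP_{\matA}^{\perp}\matE_c$, an $(m-r)\times d$ Gaussian whose smallest singular value concentrates near $(\sqrt{m-r}-\sqrt{d})\sigma_c$. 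Combined with the $(1+\varepsilon)$ embedding factor, this lower bound plugs into the ridge-bias expression $\lambda/(\sigma_{\min}^2 + \lambda)$ in transverse directions to produce $\gamma$.

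The delicate step is this last analysis: cleanly separating the directions inside $\text{col}(\matA)$ from the transverse directions injected by $\matE_c$, and propagating the Gaussian singular-value concentration through the regularized normal equations to recover the precise denominator $((\tfrac{1}{2}\sqrt{m-r}-\sqrt{d})^2\sigma_c^2 + \lambda)^2$ in $\gamma$, all while the leverage scores used for row sampling are computed on the noisy $\tilde{\matC}$ rather than on $\matA$. Managing the interaction between the transverse Gaussian noise in $\tilde{\matC}$ and the ridge regularization without losing sharpness is, in my view, the main technical obstacle; the remaining terms in the bound are controlled by standard matrix-concentration and sketching tools.
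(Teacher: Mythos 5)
Your outline is sound and would very likely go through, and it is close in substance to the paper's proof while being organized around a different top-level decomposition. The paper never introduces $\matZ^{\star}=\matC^{+}\matA$ at the top level; instead it invokes a general structural result for sketched noisy ridge regression (Corollary~\ref{cor:structural_sketched_proximal_regularized_LS}) that splits the error into four terms: $T_1=\|(\matI-\matP_{\tilde{\matC}})\matA\|_F^2$, a bias term $T_2=\gamma\|\matP_{\tilde{\matC}}\matA\|_F^2$, a noise term $T_3=\tfrac{4}{1-\varepsilon}\|\matS^T\matE_e\|_F^2$, and a cross term $T_4=\tfrac{4}{1-\varepsilon}\|\matS^T(\matI-\matP_{\tilde{\matC}})\matA\|_F^2$. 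Your errors-in-variables split $\matA-\tilde{\matC}\matX=-\matE_c\matZ^{\star}+\tilde{\matC}(\matZ^{\star}-\matX)$ reaches the same mechanisms: since $(\matI-\matP_{\tilde{\matC}})\matA=-(\matI-\matP_{\tilde{\matC}})\matE_c\matZ^{\star}$, your bound (i) essentially yields Lemma~\ref{lem:structural_noisy_approx}, and your $\gamma$ mechanism---lower-bounding $\sigma_d(\tilde{\matC})$ by $\sigma_{\min}(\matP_{\matA}^{\perp}\matE_c)$ and feeding it into the worst-case shrinkage factor---is exactly how the paper bounds $T_2$. Three glossed points deserve care. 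First, in your step (ii) the residual $-\matE_c\matZ^{\star}$ is partially fitted by the regression, producing a term like $\|\matS^T\matE_c\matZ^{\star}\|_F^2$ (the analogue of $T_4$) that cannot be controlled by the embedding property because $\matS$ is correlated with $\tilde{\matC}$; the paper conditions on the column-sampling stage, uses $\E[\matS\matS^T]=\matI$, and applies Markov's inequality, which is the source of the constant $0.9$ in the success probability---your sketch should budget for this. Second, the $12\varepsilon(\sigma_e^2/\sigma_c^2)\,d\,\sigma_r^2(\matA)$ term is not intrinsic ridge variance: it is $\|\matS^T\matE_e\|_F^2\lesssim \tfrac{m}{s}\cdot sn\sigma_e^2=mn\sigma_e^2$ (via $\|\matS\|_2^2\le 2m/s$ and $\chi^2$ concentration), then re-expressed using the hypothesis on $d$ plus denseness, which gives $mn\sigma_c^2\le\varepsilon(1-\varepsilon)d\sigma_r^2(\matA)$. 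Third, your worry about leverage scores being computed on the noisy $\tilde{\matC}$ is a non-issue: $\matS$ only needs to be a subspace embedding for the design matrix of the regression, which \emph{is} $\tilde{\matC}$, so shrinked-leverage sampling with respect to $\tilde{\matC}$ is precisely what the guarantee of~\cite{wang2018sketched} requires. What your route buys is a self-contained argument; what the paper's buys is a reusable structural bound for sketched noisy proximally regularized least squares and a cleaner separation of the probabilistic events.
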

The proof of this result is deferred.

\paragraph{\textbf{Comparison of nuclear norm and noisyCUR approximation guarantees.} }Theorem~\ref{thm:noisycur_guarantee} implies that, if $\matA$ has low condition number, is dense and column-incoherent, and the regularization parameter $\lambda$ is selected to be $o((\sqrt{m-r}-\sqrt{d})^2 \sigma_c)$, then a mixed relative-additive bound of the form
\begin{align}
\label{eqn:ourguarantee}
\|\matA - \overline{\matA}\|_F^2 & \leq \varepsilon^\prime \|\matA\|_F^2 + \varepsilon^{\prime\prime} \tilde{O}(r) \sigma_r^2(\matA) \notag \\
 & = \varepsilon^\prime \|\matA\|_F^2 + \varepsilon^{\prime\prime} \tilde{O}(\|\matA\|_F^2) 
\end{align}
holds with high probability for the approximation returned by Algorithm~\ref{alg:noisycur}, where $\varepsilon^\prime$ and $\varepsilon^{\prime\prime}$ are $o(1)$, when $d$ and $s$ are $\tilde{\Omega}(r)$.

By way of comparison, the current best guarantees for noisy matrix completion using nuclear norm formulations state that if $d = \Omega((n + m)r \log (n+m))$ entries are noisily observed with noise level $\sigma_e^2$, then a nuclear norm formulation of the matrix completion problem yields an approximation $\overline{\matA}$ that satisfies
\begin{equation*}
\|\matA - \overline{\matA}\|_F^2 = O\left( \frac{\sigma_e^2}{\sigma_r^2(\matA) }  \frac{nm}{r} \right) \|\matA\|_F^2
\end{equation*}
with high probability~\cite{chen2019noisy,balcan2019nonconvex}. The conditions imposed to obtain this guarantee~\cite{chen2019noisy} are that $\matA$ has low condition number and that both its row and column spaces are incoherent. If we additionally require that $\matA$ is dense, so that the assumptions applied to both algorithms are comparable, then $\|\matA\|_F^2 = \Omega(mn)$ and the guarantee for nuclear norm completion becomes
\begin{equation}
\label{eqn:bestnnguarantee}
\|\matA - \overline{\matA}\|_F^2 = O(\sigma_e^2 \kappa_2^2(\matA))\|\matA\|_F^2.
\end{equation}


\paragraph{\bfseries Comparison of budget requirements for nuclear norm completion and noisy CUR.}
For nuclear norm completion approaches to assure guarantees of the form \eqref{eqn:bestnnguarantee} it is necessary to obtain $\Omega((n+m)r\log(n+m))$ high precision noisy samples~\cite{candes2010power}, so the budget required is
\[
B_{NN} = \Omega((n+m)r \log(n+m)p_e).
\]
When $B_{NN}$ exceeds the budget $B$, there is no theory supporting the use of a mix of more expensive high and cheaper low precision noisy measurements. 

The noisyCUR algorithm allows exactly such a mix: the cost of
obtaining the necessary samples is
\[
B_{nCUR} = dp_c + snp_e = \tilde{\Omega}(r)p_c + \tilde{\Omega}_r(nr)p_e,
\]
where the notation $\tilde{\Omega}_r(\cdot)$ is used to indicate that the omitted logarithmic factors depend only on $r$. It is evident that
\[
B_{nCUR} < B_{NN},
\]
so the noisy CUR algorithm is applicable in budget regimes where nuclear norm completion is not.

\subsection{Proof of Theorem~\ref{thm:noisycur_guarantee}}

Theorem~\ref{thm:noisycur_guarantee} is a consequence of two structural results that are established in the supplement. 

The first result states that if $\text{rank}(\matC) = \text{rank}(\matA)$ and the bottom singular value of $\matC$ is large compared to $\sigma_c$, then the span of $\tilde{\matC}$ will contain a good approximation to $\matA$. 

\begin{lem}
	\label{lem:structural_noisy_approx}
	Fix an orthonormal basis $\matU \in \R^{m \times r}$ and consider $\matA \in \R^{m \times n}$ and $\matC \in \R^{m \times d}$ with factorizations $\matA = \matU \matM$ and $\matC = \matU \matW$, where both $\matM$ and $\matW$ have full row rank. Further, let $\tilde{\matC}$ be a noisy observation of $\matC$, that is, let $\tilde{\matC} = \matC + \matG$ where the entries of $\matG$ are i.i.d. $\mathcal{N}(0, \sigma_{c}^2)$. If $\sigmamin(\matC) \geq 2(1 + \delta) \sigma_c \sqrt{m/\varepsilon}$, then 
	\[
		\|(\matI - \matP_{\tilde{\matC}})\matA\|_F^2 \leq \varepsilon \|\matA\|_F^2
		\]
	with probability at least $1 - \exp\left(\frac{-m\delta^2}{2}\right)$.
\end{lem}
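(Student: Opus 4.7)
My plan is to exploit the exact column-span containment $\text{col}(\matA) \subseteq \text{col}(\matC)$: since $\matC = \matU\matW$ with $\matW$ of full row rank we have $\text{col}(\matC) = \text{col}(\matU)$, and the factorization $\matA = \matU\matM$ forces $\matA \in \text{col}(\matC)$. Consequently $\matC\matC^+\matA = \matA$, and this identity is what lets the Gaussian perturbation be isolated cleanly from the deterministic signal.

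First I would use the variational characterization of the orthogonal projector, $\|(\matI - \matP_{\tilde{\matC}})\matA\|_F = \min_{\matY} \|\matA - \tilde{\matC}\matY\|_F$, and plug in the specific (suboptimal) choice $\matY = \matC^+\matA$. Using $\matC\matC^+\matA = \matA$, the residual collapses to $\matA - \tilde{\matC}\matY = \matA - (\matC + \matG)\matC^+\matA = -\matG\matC^+\matA$, so a submultiplicative norm split gives
\[
\|(\matI - \matP_{\tilde{\matC}})\matA\|_F \leq \|\matG\matC^+\matA\|_F \leq \|\matG\|_2 \cdot \|\matC^+\|_2 \cdot \|\matA\|_F = \frac{\|\matG\|_2}{\sigmamin(\matC)}\|\matA\|_F.
\]

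Next I would apply the Davidson--Szarek tail bound: for a Gaussian $\matG \in \R^{m \times d}$ with i.i.d.\ $\mathcal{N}(0,\sigma_c^2)$ entries, $\|\matG\|_2 \leq \sigma_c(\sqrt{m} + \sqrt{d} + t)$ holds with probability at least $1 - \exp(-t^2/2)$. Choosing $t = \delta\sqrt{m}$ matches the target failure probability $\exp(-m\delta^2/2)$; in the natural regime $d \leq (1+\delta)^2 m$ this simplifies to $\|\matG\|_2 \leq 2(1+\delta)\sigma_c\sqrt{m}$. Combining with the hypothesis $\sigmamin(\matC) \geq 2(1+\delta)\sigma_c\sqrt{m/\varepsilon}$ yields $\|\matG\|_2/\sigmamin(\matC) \leq \sqrt{\varepsilon}$, and squaring the overall inequality produces the stated bound $\|(\matI - \matP_{\tilde{\matC}})\matA\|_F^2 \leq \varepsilon\|\matA\|_F^2$.

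The only real subtlety is the choice of $\matY$: taking $\matY = \matC^+\matA$ rather than the seemingly more natural $\tilde{\matC}^+\matA$ exploits the \emph{exact} column-span containment to annihilate the deterministic portion of the residual, leaving only the stochastic term $\matG\matC^+\matA$ whose size is controlled by $\|\matG\|_2$ and the conditioning of $\matC$. Everything else is routine; the remaining work is bookkeeping to tune the tail parameter $t$ so that the Davidson--Szarek bound matches the $2(1+\delta)\sigma_c\sqrt{m}$ budget implicit in the hypothesis on $\sigmamin(\matC)$.
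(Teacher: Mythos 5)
Your proof is correct and is essentially the same argument the paper gives: use the variational characterization of the projection with the suboptimal coefficient choice $\matY = \matC^+\matA$, exploit the exact span containment ($\matC\matC^+\matA = \matA$) to collapse the residual to $-\matG\matC^+\matA$, bound it by $\|\matG\|_2/\sigmamin(\matC)\cdot\|\matA\|_F$, and control $\|\matG\|_2$ with the Davidson--Szarek tail at $t = \delta\sqrt{m}$. The one piece of bookkeeping you correctly flag---that simplifying $\sigma_c(\sqrt{m}+\sqrt{d}+\delta\sqrt{m})$ to $2(1+\delta)\sigma_c\sqrt{m}$ needs $d \leq (1+\delta)^2 m$---is harmless here, since the lemma is only invoked with $d \leq m$ column samples.
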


Recall the definition of a $(1\pm \varepsilon)$-subspace embedding.

\begin{defn}[Subspace embedding~\cite{woodruff2014sketching}]
	Let $\matA \in \R^{m \times n}$ and fix $\varepsilon \in (0,1)$. A matrix $\matS \in \R^{m \times s}$ is a $(1 \pm \varepsilon)$-subspace embedding for $\matA$ if
	\[
		(1 - \varepsilon) \|\x\|_2^2 \leq \|\matS^T\x\|_2^2 \leq (1 + \varepsilon) \|\x\|_2^2
	\]
	for all vectors $\x$ in the span of $\matA,$ or equivalently, if
	\[
	(1 - \varepsilon) \matA^T \matA \preceq \matA^T \matS \matS^T \matA \preceq (1 + \varepsilon) \matA^T \matA.
	\]
	Often we will use the shorthand ``subspace embedding'' for $(1 \pm \varepsilon)$-subspace embedding.
\end{defn}

The second structural result is a novel bound on the error of sketching using a subspace embedding to reduce the cost of ridge regression, when the target is noisy.

\begin{cor}
  \label{cor:structural_sketched_proximal_regularized_LS}
  Let $\tilde{\matC} \in \R^{m \times d}$, where $d \leq m$, and $\tilde{\matA} = \matA + \matE$ be matrices, and let $\matS$ be an $(1 \pm \varepsilon)$-subspace embedding for $\tilde{\matC}$. If
	\[
    \matX = \argmin_{\matZ} \|\matS^T(\tilde{\matA} - \tilde{\matC} \matZ)\|_F^2 + \lambda \|\matZ\|_F^2,
	\]
	then 
	\begin{multline*}
    \|\matA - \tilde{\matC} \matX\|_F^2 \leq \|(\matI - \matP_{\tilde{\matC}})\matA\|_F^2 + \gamma \|\matP_{\tilde{\matC}} \matA\|_F^2
       + \frac{4}{1 - \varepsilon}\|\matS^T \matE\|_F^2 + \frac{4}{1 - \varepsilon}\|\matS^T (\matI - \matP_{\tilde{\matC}})\matA\|_F^2,
	\end{multline*}
  where $\gamma =  2 \left(\frac{1 + \varepsilon}{1 - \varepsilon} \right) \left( \frac{\lambda}{(1 + \varepsilon) \sigma_d(\tilde{\matC})^2 + \lambda} \right)^{2}.$
\end{cor}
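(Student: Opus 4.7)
My plan is a bias--variance decomposition for sketched ridge regression. Set $\matX^{*} = \tilde{\matC}^{+}\matA$, so that $\tilde{\matC}\matX^{*} = \matP_{\tilde{\matC}}\matA$. Since $(\matI - \matP_{\tilde{\matC}})\matA$ is orthogonal to the column span of $\tilde{\matC}$, which contains $\tilde{\matC}\matX$, Pythagoras gives
\[
\|\matA - \tilde{\matC}\matX\|_F^{2} = \|(\matI - \matP_{\tilde{\matC}})\matA\|_F^{2} + \|\tilde{\matC}(\matX^{*} - \matX)\|_F^{2},
\]
so it suffices to bound $\|\tilde{\matC}(\matX - \matX^{*})\|_F^{2}$, which will produce the remaining three summands in the statement.

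Next I split the target as $\tilde{\matA} = \matP_{\tilde{\matC}}\matA + (\matI - \matP_{\tilde{\matC}})\matA + \matE$ and use the linearity of the sketched ridge map $\matZ \mapsto (\matr{N}+\lambda\matI)^{-1}\tilde{\matC}^{T}\matS\matS^{T}\matZ$, with $\matr{N} = \tilde{\matC}^{T}\matS\matS^{T}\tilde{\matC}$, to decompose $\matX = \matX_{\mathrm{or}} + \matX_{\perp} + \matX_{\mathrm{e}}$ into the three corresponding pieces. The normal equations identify the ``noiseless'' piece as $\matX_{\mathrm{or}} = (\matr{N}+\lambda\matI)^{-1}\matr{N}\matX^{*}$, so $\matX_{\mathrm{or}} - \matX^{*} = -\lambda(\matr{N}+\lambda\matI)^{-1}\matX^{*}$. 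The inequality $\|\vec{a}+\vec{b}+\vec{c}\|^{2} \leq 2\|\vec{a}\|^{2} + 4\|\vec{b}\|^{2} + 4\|\vec{c}\|^{2}$ (applied column-wise in Frobenius norm) then gives
\[
\|\tilde{\matC}(\matX-\matX^{*})\|_F^{2} \leq 2\|\tilde{\matC}(\matX_{\mathrm{or}}-\matX^{*})\|_F^{2} + 4\|\tilde{\matC}\matX_{\perp}\|_F^{2} + 4\|\tilde{\matC}\matX_{\mathrm{e}}\|_F^{2}.
\]

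For the two variance terms I observe that $\matS^{T}\tilde{\matC}(\matr{N}+\lambda\matI)^{-1}\tilde{\matC}^{T}\matS$ is the ridge hat matrix of the sketched design $\matS^{T}\tilde{\matC}$ and thus has spectral norm at most $1$. Composing this with the subspace-embedding lift $\|\vec{y}\|^{2} \leq (1-\varepsilon)^{-1}\|\matS^{T}\vec{y}\|^{2}$ on the column span of $\tilde{\matC}$ (which contains both $\tilde{\matC}\matX_{\perp}$ and $\tilde{\matC}\matX_{\mathrm{e}}$) yields
\[
\|\tilde{\matC}\matX_{\perp}\|_F^{2} \leq \tfrac{1}{1-\varepsilon}\|\matS^{T}(\matI-\matP_{\tilde{\matC}})\matA\|_F^{2}, \qquad \|\tilde{\matC}\matX_{\mathrm{e}}\|_F^{2} \leq \tfrac{1}{1-\varepsilon}\|\matS^{T}\matE\|_F^{2},
\]
which produces the two $\tfrac{4}{1-\varepsilon}$-coefficients in the claim.

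The main obstacle is the bias term $\|\tilde{\matC}(\matX_{\mathrm{or}}-\matX^{*})\|_F^{2} = \lambda^{2}\|\tilde{\matC}(\matr{N}+\lambda\matI)^{-1}\matX^{*}\|_F^{2}$, which I aim to bound by $\tfrac{\gamma}{2}\|\matP_{\tilde{\matC}}\matA\|_F^{2}$. Working in the thin SVD $\tilde{\matC} = \matU_{C}\matr{\Sigma}_{C}\matV_{C}^{T}$ (full column rank is forced by the subspace-embedding hypothesis) and setting $\matr{T} = \matU_{C}^{T}\matS\matS^{T}\matU_{C}$ and $\matr{L} = \matr{\Sigma}_{C}\matr{T}\matr{\Sigma}_{C}$, a direct calculation yields
\[
\tilde{\matC}(\matr{N}+\lambda\matI)^{-1}\matX^{*} = \matU_{C}\matr{\Sigma}_{C}(\matr{L}+\lambda\matI)^{-1}\matr{\Sigma}_{C}^{-1}\matU_{C}^{T}\matA.
\]
The embedding supplies the sandwich $(1-\varepsilon)\matr{\Sigma}_{C}^{2} \preceq \matr{L} \preceq (1+\varepsilon)\matr{\Sigma}_{C}^{2}$, and the delicate step is to convert this into the operator bound needed for $\gamma$. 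Because $\matr{\Sigma}_{C}$ does not commute with the embedding deficit $\matr{T}-\matI$, the conjugation by $\matr{\Sigma}_{C}, \matr{\Sigma}_{C}^{-1}$ in the display above is not norm-preserving and a purely spectral argument on $\matr{L}+\lambda\matI$ does not suffice; one must carefully exploit the psd sandwich itself (rather than only a bound on $\lambda_{\min}(\matr{L})$) to recover the $(1+\varepsilon)\sigma_{d}(\tilde{\matC})^{2}+\lambda$ term in the denominator of $\gamma$. This non-commutative book-keeping is the crux of the proof; combining it with the variance bounds above assembles the stated inequality.
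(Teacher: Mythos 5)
Your setup is correct as far as it goes: the Pythagoras split off of $\|(\matI-\matP_{\tilde{\matC}})\matA\|_F^2$, the three-way decomposition of $\matX$ by linearity of $\matZ \mapsto (\matr{N}+\lambda\matI)^{-1}\tilde{\matC}^T\matS\matS^T\matZ$ with $\matr{N}=\tilde{\matC}^T\matS\matS^T\tilde{\matC}$, the $2/4/4$ inequality, and the two variance bounds (ridge hat matrix of $\matS^T\tilde{\matC}$ has spectral norm at most $1$, composed with the embedding lower bound on the span of $\tilde{\matC}$) are all sound and deliver exactly the two $\tfrac{4}{1-\varepsilon}$ terms. The genuine gap is that you stop at the only hard point: the bias bound $2\lambda^2\|\tilde{\matC}(\matr{N}+\lambda\matI)^{-1}\matX^*\|_F^2 \leq \gamma\|\matP_{\tilde{\matC}}\matA\|_F^2$ is reduced to the conjugated resolvent $\matU_C\matr{\Sigma}_C(\matr{L}+\lambda\matI)^{-1}\matr{\Sigma}_C^{-1}\matU_C^T\matA$ and then replaced by the assertion that ``carefully exploiting the psd sandwich'' recovers $\gamma$, with no mechanism supplied. (A smaller slip: the subspace-embedding hypothesis does not force $\tilde{\matC}$ to have full column rank --- the defining inequality says nothing off the span --- so invertibility of $\matr{\Sigma}_C$ must be assumed separately.)

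Moreover, the omitted step is not book-keeping; as targeted, it cannot be closed. Executing your plan in the natural way --- lift via $\|\vec{y}\|_2^2 \leq (1-\varepsilon)^{-1}\|\matS^T\vec{y}\|_2^2$, bound the eigenvalues $\mu/(\mu+\lambda)^2$ of $\matr{L}(\matr{L}+\lambda\matI)^{-2}$, and decouple via $\|\matX^*\|_F \leq \|\matP_{\tilde{\matC}}\matA\|_F/\sigma_d(\tilde{\matC})$ --- yields at best, writing $t=\sigma_d(\tilde{\matC})^2$ and assuming $\lambda \leq (1-\varepsilon)t$ (otherwise the unimodal maximum $1/(4\lambda)$ intrudes), the bias estimate $\bigl(\lambda/((1-\varepsilon)t+\lambda)\bigr)^2\|\matP_{\tilde{\matC}}\matA\|_F^2$; after your factor $2$, matching the stated $\gamma$ would require $\bigl((1+\varepsilon)t+\lambda\bigr)^2 \leq \tfrac{1+\varepsilon}{1-\varepsilon}\bigl((1-\varepsilon)t+\lambda\bigr)^2$, which as $\lambda \to 0$ reads $1+\varepsilon \leq 1-\varepsilon$ and fails for every $\varepsilon>0$. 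Worse, the stated inequality itself is unreachable for large $\varepsilon$ by any route: take $d=s=1$, $\tilde{\matC}=\vec{c}$, $\matA=\vec{c}$, $\matE=\matr{0}$, and $\matS^T=\sqrt{1-\varepsilon}\,\vec{c}^T/\|\vec{c}\|_2$, a legitimate $(1\pm\varepsilon)$-embedding for $\tilde{\matC}$; then a one-line ridge computation gives $\|\matA-\tilde{\matC}\matX\|_F^2 = \bigl(\lambda/((1-\varepsilon)\|\vec{c}\|_2^2+\lambda)\bigr)^2\|\matA\|_F^2$, while the right-hand side is $\gamma\|\matA\|_F^2$, and for small $\lambda>0$ the former exceeds the latter whenever $\varepsilon>1/3$. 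So your closing claim that the variance bounds plus sandwich book-keeping ``assemble the stated inequality'' overreaches: any correct completion needs a restricted $\varepsilon$ regime and a bias argument sharper than the decoupled sandwich estimate. Note that the paper does not prove the corollary in the main text either --- it specializes a separate, more general theorem on sketched noisy proximally regularized least squares from its supplement --- so the mechanism you are missing is precisely the content the paper outsources, and your plan as written does not supply it.
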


Corollary~\ref{cor:structural_sketched_proximal_regularized_LS} differs significantly from prior results on the error in sketched ridge regression, e.g.~\cite{avron2017sharper,wang2018sketched}, in that: (1) it bounds the \emph{reconstruction error} rather than the \emph{ridge regression objective}, and (2) it considers the impact of noise in the target. This result follows from a more general result on sketched noisy proximally regularized least squares problems, stated as Theorem~\ref{supp-thm:sketched_regularized_LS} in the supplement.

Together with standard properties of Gaussian noise and subspace embeddings, these two results deliver Theorem~\ref{thm:noisycur_guarantee}.

\begin{proof}[Proof of Theorem~\ref{thm:noisycur_guarantee}]
The noisyCUR algorithm first forms the noisy column samples $\tilde{\matC} = \matC + \matE_c$, where $\matC = \matA \matM$.  The random matrix $\matM \in \R^{n \times d}$ selects $d$ columns uniformly at random with replacement from the columns of $\matA$, and the entries of $\matE_c \in \R^{m \times d}$ are i.i.d.\ $\mathcal{N}(0, \sigma_c^2)$. It then solves the sketched regression problem
\[
\matX = \argmin_{\matZ} \|\matS^T(\tilde{\matA} - \tilde{\matC} \matZ)\|_F^2 + \lambda \|\matZ\|_F^2,
\]
and returns the approximation $\overline{\matA} = \tilde{\matC} \matX$. Here $\tilde{\matA} = \matA + \matE_e$, where $\matE_e \in \R^{m \times n}$ comprises i.i.d $\mathcal{N}(0, \sigma_e^2)$ entries, and the sketching matrix $\matS \in \R^{m \times s}$ samples $s$ rows using the shrinked leverage scores of $\tilde{\matC}$.

By~\cite[Appendix A.1.1]{wang2018sketched}, $\matS$ is a subspace embedding for $\tilde{\matC}$ with failure probability at most $\delta$ when $s$ is as specified. Thus Corollary~\ref{cor:structural_sketched_proximal_regularized_LS} applies and gives that
\begin{align*}
    \|\matA - \tilde{\matC} \matX\|_F^2 & \leq \|(\matI - \matP_{\tilde{\matC}})\matA\|_F^2 + \gamma^\prime \|\matP_{\tilde{\matC}} \matA\|_F^2 \\
      & \quad \quad + \frac{4}{1 - \varepsilon}\|\matS^T \matE\|_F^2 + \frac{4}{1 - \varepsilon}\|\matS^T (\matI - \matP_{\tilde{\matC}})\matA\|_F^2 \\
       & = T_1 + T_2 + T_3 + T_4,
\end{align*}
where $\gamma^\prime =  2 \left(\frac{1 + \varepsilon}{1 - \varepsilon} \right) \left( \frac{\lambda}{(1 + \varepsilon) \sigma_d(\tilde{\matC})^2 + \lambda} \right)^{2}.$ We now bound the four terms $T_1$, $T_2$, $T_3$, and $T_4$. 

To bound $T_1$, note that by~\cite[Lemma 13]{wang2016towards}, the matrix $\sqrt{\tfrac{n}{d}} \matM$ is a subspace embedding for $\matA^T$ with failure probability at most $\delta$ when $d$ is as specified. This gives the semidefinite inequality $\frac{n}{d} \matC \matC^T = \frac{n}{d} \matA \matM \matM^T \matA^T \succeq (1 - \varepsilon) \matA\matA^T$, which in turn gives that
\begin{align*}
\sigma_{r}^2(\matC) & \geq (1 - \varepsilon) \frac{d}{n} \sigma_r^2(\matA) 
\geq \frac{8(1+\delta)^2}{c^2 \varepsilon} \frac{r}{n} \|\matA\|_2^2 \sigma_c^2 \\
& \geq \frac{8(1+\delta)^2}{c^2 \varepsilon n} \|\matA\|_F^2 \sigma_c^2 \geq 4(1+ \delta)^2\frac{m}{\varepsilon} \sigma_c^2. 
\end{align*}
The second inequality holds because 
\begin{equation}
\label{eqn:lowerbound_bottom_singval}
d \geq \frac{8(1 + \delta)^2}{c^2(1 - \varepsilon)\varepsilon} r \kappa_2(\matA)^2 \sigma_c^2 \quad\text{implies}\quad 
\sigma_r^2(\matA) \geq \frac{8(1 + \delta)^2}{c^2(1 - \varepsilon)\varepsilon} \frac{r}{d} \|\matA_2\|^2 \sigma_c^2
\end{equation}
The third inequality holds because $r \|\matA\|_2^2$ is an overestimate of $\|\matA_F\|_2^2$. The final inequality holds because the denseness of $\matA$ implies that $\|\matA\|_F^2 \geq \tfrac{1}{2} c^2 mn$.

Note also that the span of $\matC = \matA \matM$ is contained in that of $\matA$, and since $\frac{n}{d} \matC \matC^T \succeq (1 - \varepsilon) \matA \matA^T$, in fact $\matC$ and $\matA$ have the same rank and therefore span the same space. Thus the necessary conditions to apply Lemma~\ref{lem:structural_noisy_approx} are satisfied, and as a result, we find that
\[
T_1 \leq \varepsilon \|\matA\|_F^2
\]
with failure probability at most $\exp(-\tfrac{m\delta^2}{2}).$

Next we bound $T_2$. Observe that $\|\matP_{\tilde{\matC}} \matA\|_F^2 \leq \|\matA\|_F^2$. Further, by Lemma~\ref{supp-lem:perturbed_bottom_sing_val} in the supplement,
\[
\sigma_d(\tilde{\matC}) \geq \left(\frac{1}{2}\sqrt{m-r} - \sqrt{d}\right)^2 \sigma_c^2
\]
with failure probability at most $\exp(\tfrac{-(m-r)\delta^2}{2})$. This allows us to conclude that
\[
T_2 \leq \gamma \|\matA\|_F^2,
\]
where $\gamma$ is as specified in the statement of this theorem.

To bound $T_3$, we write
\begin{align*}
T_3 & = \frac{4}{1-\varepsilon}\|\matS^T \matP_{\matS} \matE\|_F^2 \leq \frac{4}{1-\varepsilon}\|\matS\|_2^2 \|\matP_{\matS} \matE\|_F^2 \\
& \leq \frac{8}{1-\varepsilon} \frac{m}{s} \|\matQ^T \matE\|_F^2,
\end{align*}
where $\matQ$ is an orthonormal basis for the span of $\matS$.
The last inequality holds because~\cite[Appendix A.1.2]{wang2018sketched} shows that $\|\matS\|_2^2 \leq 2\tfrac{m}{s}$ always. Finally, note that $\matQ$ has at most $s$ columns, so in the worst case $\matQ^T\matE$ comprises $sn$ i.i.d.\ $\mathcal{N}(0, \sigma_e^2)$ entries. A standard concentration bound for $\chi^2$ random variables with $sn$ degrees of freedom~\cite[Example 2.11]{wainwright2019high} guarantees that
\[
\|\matQ^T \matE\|_F^2 \leq \frac{3}{2} sn \sigma_e^2 
\]
with failure probability at most $\exp(\tfrac{-sn}{32})$. We conclude that, with the same failure probability,
\[
T_3 \leq \frac{12}{1- \varepsilon} m n \sigma_e^2. 
\]
Now recall \eqref{eqn:lowerbound_bottom_singval}, which implies that
\begin{align*}
\varepsilon(1-\varepsilon) d\sigma_r^2(\matA) 
& \geq \frac{8(1 + \delta)^2}{c^2} r \|\matA_2\|^2 \sigma_c^2 
\geq \frac{8(1 + \delta)^2}{c^2} \|\matA\|_F^2 \sigma_c^2 \\
& \geq 4(1 + \delta)^2 m n \sigma_c^2 \geq m n \sigma_c^2.
\end{align*}
It follows from the last two displays that
\[
T_3 \leq 12 \varepsilon \left(\frac{\sigma_e^2}{\sigma_c^2} \right) d \sigma_r^2(\matA).
\]

The bound for $T_4$ is an application of Markov's inequality. In particular, it is readily verifiable that $\mathbb{E}[\matS\matS^T] = \matI$, which implies that
\[
\mathbb{E}T_4 = \frac{4}{1- \varepsilon}\|(\matI - \matP_{\tilde{\matC}})\matA \|_F^2 = \frac{4}{1- \varepsilon}T_1 \leq \frac{4\varepsilon}{1- \varepsilon}\|\matA\|_F^2.
\]
The final inequality comes from the bound $T_1 \leq \varepsilon \|\matA\|_F^2$ that was shown earlier. Thus, by Markov's inequality, 
\[
T_4 \leq \frac{40\varepsilon}{1- \varepsilon}\|\matA\|_F^2
\]
with failure probability at most $0.1$.

Collating the bounds for $T_1$ through $T_4$ and their corresponding failure probabilities gives the claimed result.
\end{proof}

\section{Empirical Evaluation}
\label{sec:empirical}
In this section we investigate the performance of the noisyCUR method on a small-scale synthetic data set and on the Jester and MovieLens data sets. We compare with the performance of three nuclear norm-based algorithms in a low and a high-budget regime.

\subsection{Experimental setup} 
Four parameters are manipulated to control the experiment setup: 
\begin{enumerate}
	\item The budget, taken to be of the size $B = c_0mrp_e$ for some constant positive integer $c_0$. This choice ensures that the $O((n+m)r \log(n+m))$ high precision samples needed for nuclear norm completion methods cannot be obtained.  
	\item The ratio of the cost of sampling a column to that of individually sampling each entry in that column, $\alpha = \frac{p_c}{m p_e}$. For all three experiments, we set $\alpha = 0.2$.	
	\item The entry sampling noise level $\sigma_e^2$.
	\item The column sampling noise level $\sigma_c^2$. 
\end{enumerate}
	Based on the signal-to-noise ratio between the matrix and the noise level of the noisiest observation model, $\sigma_c^2$, we classify an experiment as being high noise or low noise. The entry-wise signal-to-noise ratio is given by 
	\[
	SNR = \frac{\|\matA\|_F^2}{mn\sigma_c^2}.
	\]
	High SNR experiments are said to be low noise, while those with low SNR are said to be high noise. 

 \subsection{Methodology: noisyCUR and the baselines}

We compare to three nuclear norm-based MC algorithms, as nuclear norm-based approaches are the most widely used and theoretically investigated algorithms for low-rank MC. We additionally compare to the CUR+ algorithm of~\cite{Cur+} as it is, similarly to noisyCUR, a regression-based MC algorithm. 

To explain the baselines, we introduce some notation. Given a set of indices $\Omega$, the operator $\mathcal{P}_{\Omega} : \R^{m \times n} \rightarrow \R^{m \times n}$ returns a matrix whose values are the same as those of the input matrix on the indices in $\Omega$, and zero on any indices not in $\Omega$. The set $\Omega_s$ below comprises the indices of entries of $\matA$ sampled with high accuracy, while $\Omega_c$ comprises the indices of entries of $\matA$ sampled using the low accuracy column observation model. 

\paragraph{\bfseries (nCUR)} Given the settings of the two-cost model, the noisyCUR algorithm is employed by selecting a value for $d$, the number of noisy column samples; the remaining budget then determines $s$, the number of rows that are sampled with high precision. Cross-validation is used to select the regularization parameter $\lambda$.

\paragraph{\bfseries (CUR+)} The CUR+ algorithm is designed for noiseless matrix completion~\cite{Cur+}; it is adapted in a straightforward manner to our setting. Now $d$ is the number of noisy row and column samples, and $d/2$ columns and $d/2$ rows are sampled uniformly with replacement from $\matA$ and noisily observed to form column and row matrices $\matC$ and $\matR$. The remaining budget is used to sample entries to form $\Omega_e$ and ${\matr A}_{\text{obs}}$, the partially observed matrix which contains the observed noisy entry samples and is zero elsewhere. The CUR+ algorithm then returns the low-rank approximation $\overline{\matA} = \matC \matU \matR$, where $\matU$ is obtained by solving
\[
\matU = \argmin \| \mathcal{P}_{\Omega_e}(\matA_{\text{obs}} - \matC \matU \matR)\|_F^2.
\]

\paragraph{\bfseries (NNa)} The first of the nuclear norm baselines is the formulation introduced in~\cite{CANDESNOISE}, which forms the approximation
\begin{equation} \label{e2.1}
		\begin{aligned}
		\overline{\matA} = & \argmin_{\matZ} \norm{\matZ}_\star \\
			              & \mkern15mu \textrm{s.t.} \norm{\mathcal{P}_{\Omega_e}(\matZ-\matA_{\text{obs}})}_F\leq \delta, \quad (i,j)\in \Omega_e.
		\end{aligned}	
\end{equation} 
All of the budget is spend on sampling entries to form $\Omega_e$ and ${\matr A}_{\text{obs}}$, the partially observed matrix which contains the observed noisy entry samples and is zero elsewhere. Thus the performance of this model is a constant independent of $d$ in the figures. The hyperparameter $\delta$ is selected through cross-validation. This baseline is referred to as NNa (nuclear norm minimization for all entries) in the figures.

\paragraph{\bfseries (NNs)} The second nuclear norm baseline is a nuclear norm formulation that penalizes the two forms of observations separately, forming the approximation
	\begin{equation} \label{e2.2}
		\begin{aligned}
		\overline{\matA} = & \argmin_{\matZ} \norm{{ \matZ}}_\star \\
			& \textrm{s.t.} 
 			\;  \norm{\mathcal P_{\Omega_c}(\matZ-{\matr A}_{\text{obs}})}_F^2\leq C_1d m\sigma_c^2\\
 			&\; \ \quad \norm{\mathcal P_{\Omega_e}(\matZ-{\matr A}_{\text{obs}})}_F^2 \leq C_2f\sigma_e^2
		\end{aligned}
		\end{equation}
where $C_1$ and $C_2$ are parameters, and again ${\matr A}_{\text{obs}}$ is the partially observed matrix which contains the observed noisy column samples and entry samples and is zero elsewhere. As with the noisyCUR method, given a value of $d$, the remaining budget is spent on sampling $s$ rows with high precision. The hyperparameters $C_1$ and $C_2$ are selected through cross-validation. This baseline is referred to as NNs (nuclear norm split minimization) in the figures.

\paragraph{\bfseries (Chen)} The final nuclear norm baseline is an adaptation of the two-phase sampling method of~\cite{YCHEN}. This method spends a portion of the budget to uniformly at random sample entries to estimate leverage scores, then uses the rest of the budget to sample entries according to the leverage score and reconstructs from the union of these samples using the same optimization as NNa. The performance is therefore independent of $d$. This baseline is referred to as Chen in the figures.

The details of cross-validation of the parameters for the nuclear norm methods are omitted because there are many relevant hyperparameters: in addition to the  constraint parameters in the optimization formulations, there are important hyperparameters associated with the ADMM solvers used (e.g., the Lagrangian penalty parameters).

\subsection{Synthetic Dataset}
    Figure~\ref{fig:synthetic} compares the performance of the baseline methods and noisyCUR on an incoherent synthetic data set $\matA \in \R^{80 \times 60}$ generated by sampling a matrix with i.i.d.~$\mathcal{N}(5, 1)$ entries and taking its best rank four approximation. For each value of $d$, the regularization parameter $\lambda$ of noisyCUR is selected via cross-validation from 500 logarithmically spaced points in the interval $(10^{-4}, 10)$.

	\begin{figure}[h!]
    \centering
    \subfloat{\includegraphics[width=0.45\linewidth]{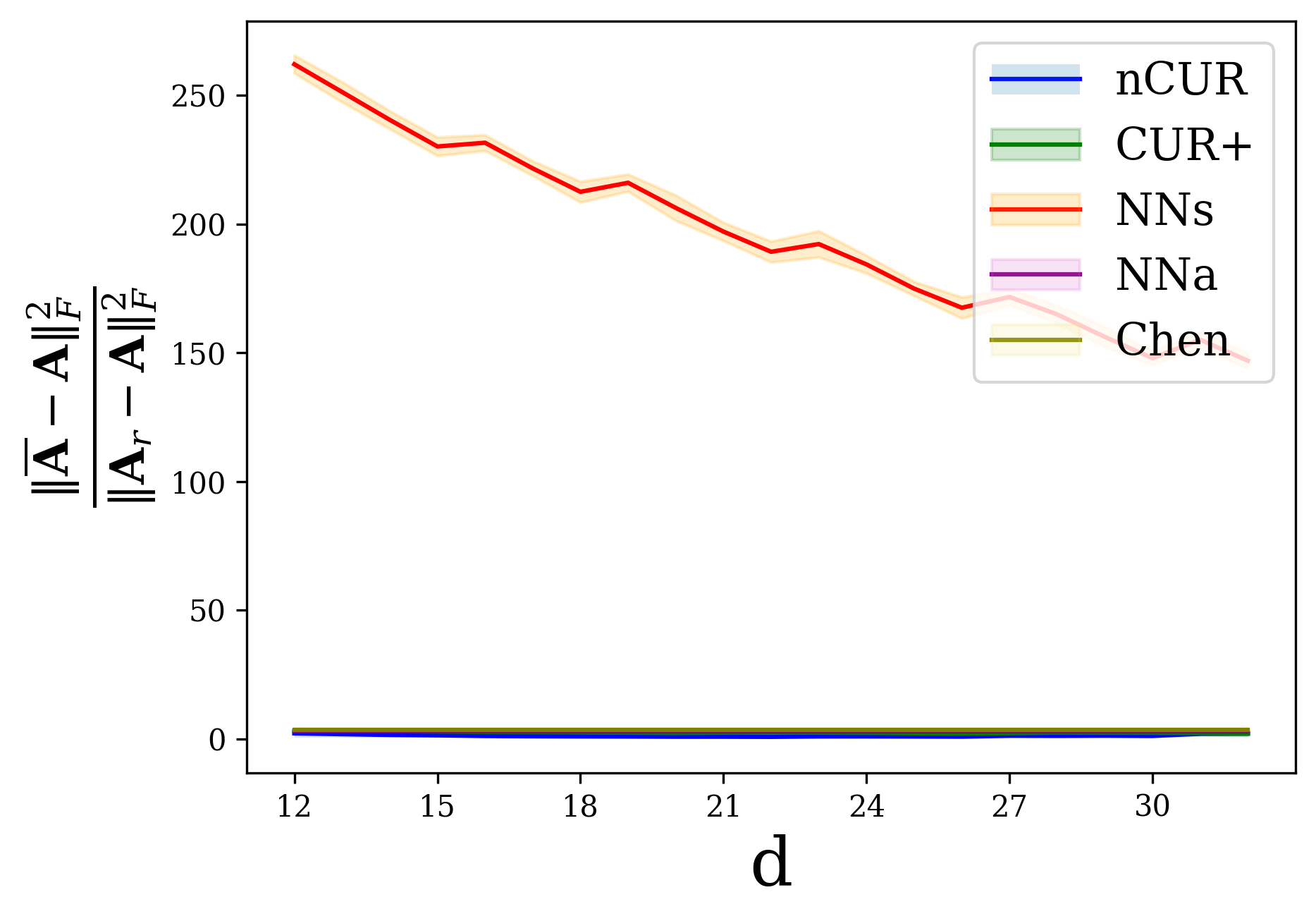} }%
    \subfloat{\includegraphics[width=0.45\linewidth]{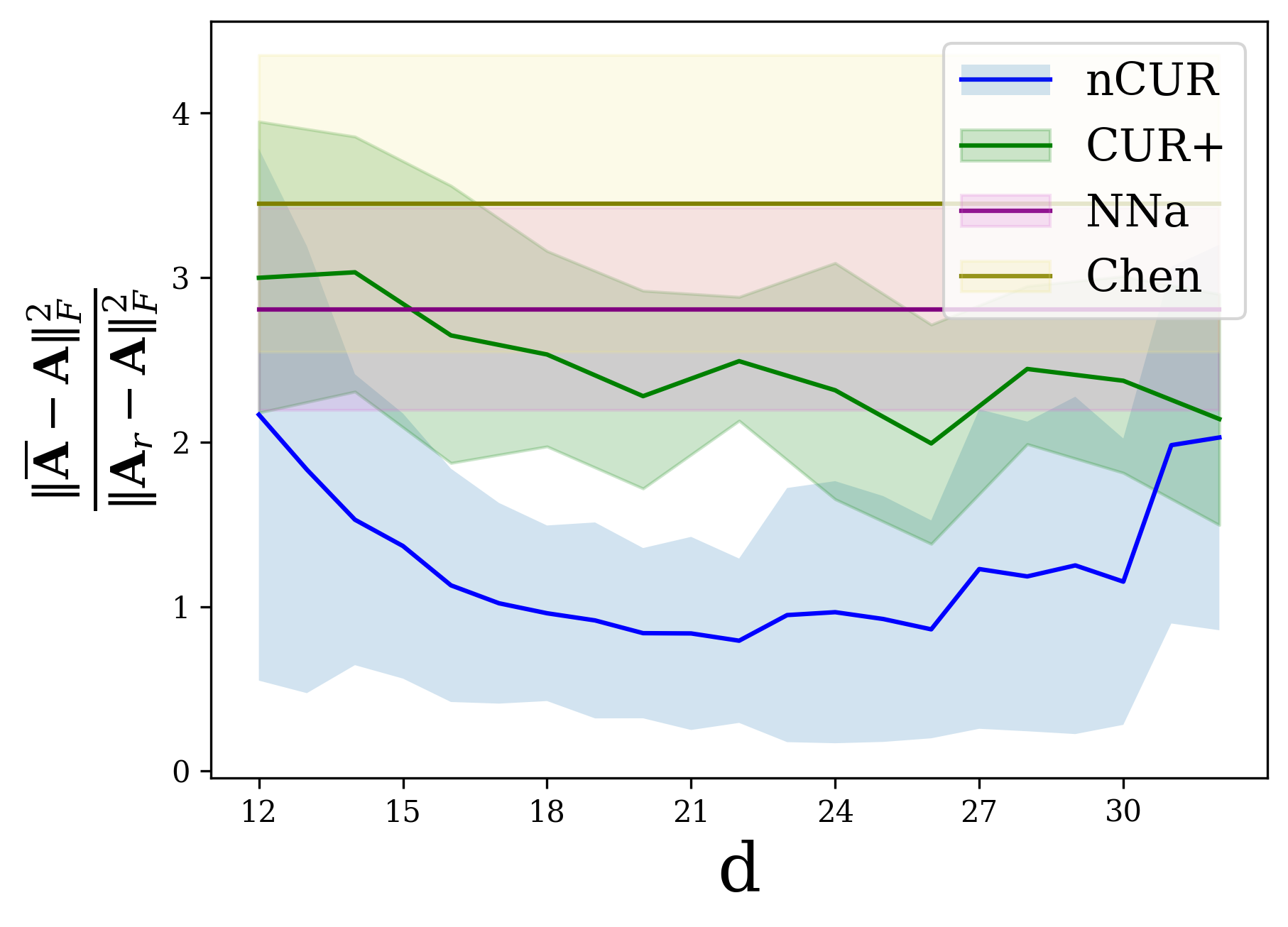} }
    
    \subfloat{\includegraphics[width=0.45\linewidth]{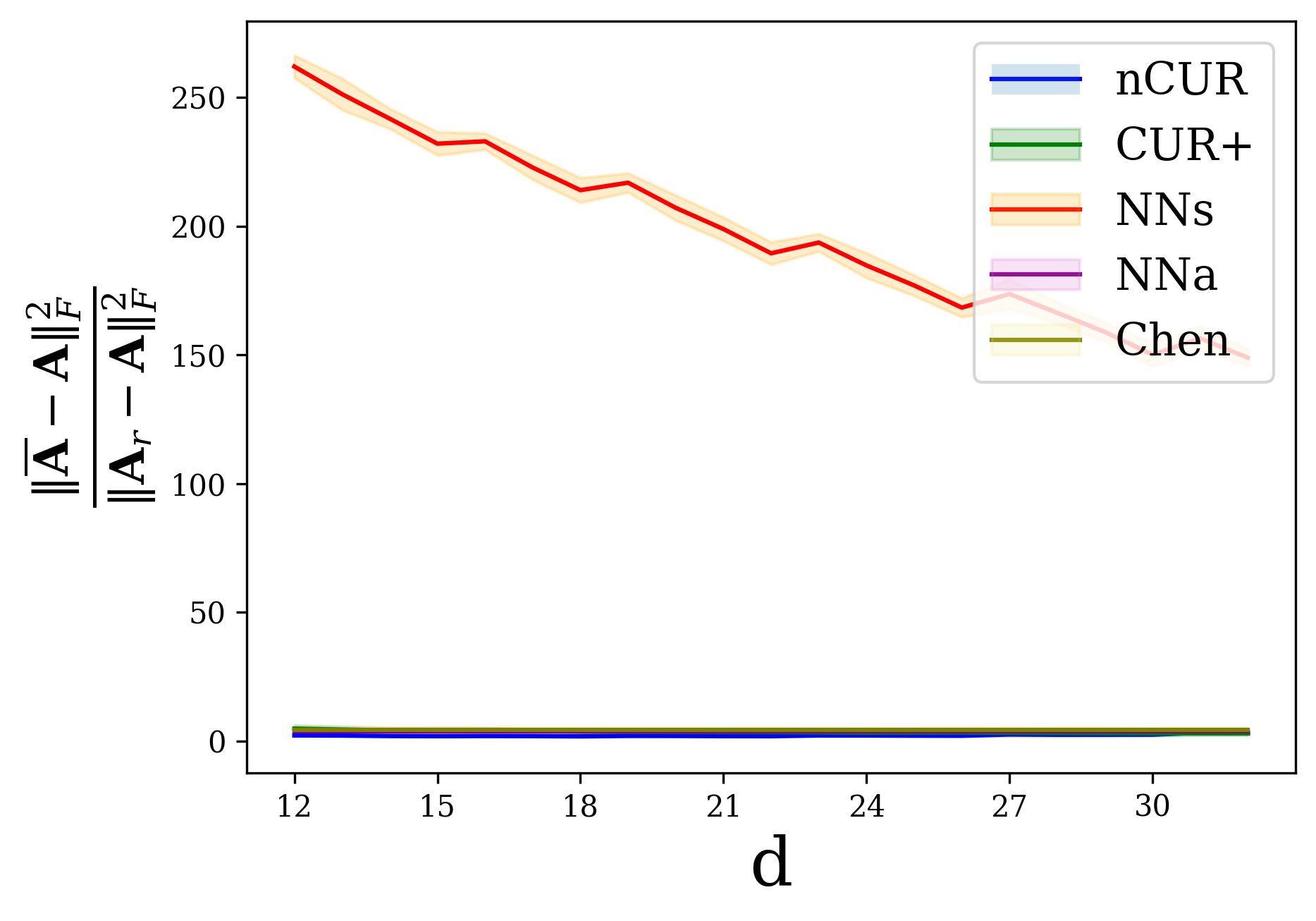} }%
    \subfloat{\includegraphics[width=0.45\linewidth]{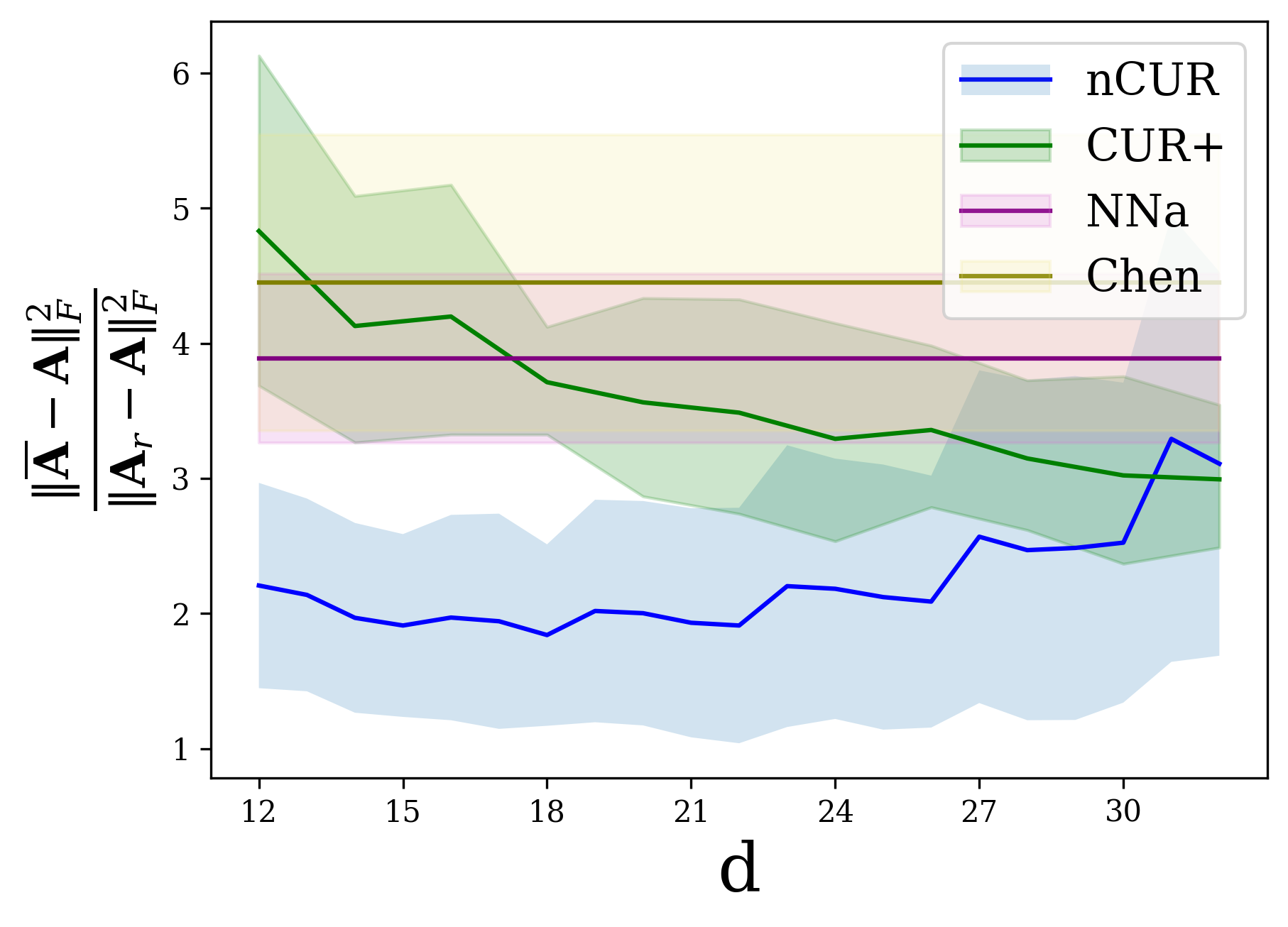} }%
    
    \caption{Performance on the synthetic dataset. $\beta=15\%$ for all plots. The noise level is low in the top plots: $\sigma_e^2=0.01,\sigma_c^2=0.05$. The upper left plot shows all methods, while the upper right plot removes the NNs method to facilitate comparison of the better performing methods. 
    In the bottom two plots, the noise level is higher: $\sigma_e^2=0.04,\sigma_c^2=0.2$. Similarly, the bottom left plot shows all methods, while the bottom right removes the NNs method. Each point in the plots is the average of 100 runs.}%
    \label{fig:synthetic}%
    \end{figure}	
    
\subsection{Jester}

\begin{figure}[htb!]%
    \centering
    \subfloat{\includegraphics[width=0.45\linewidth]{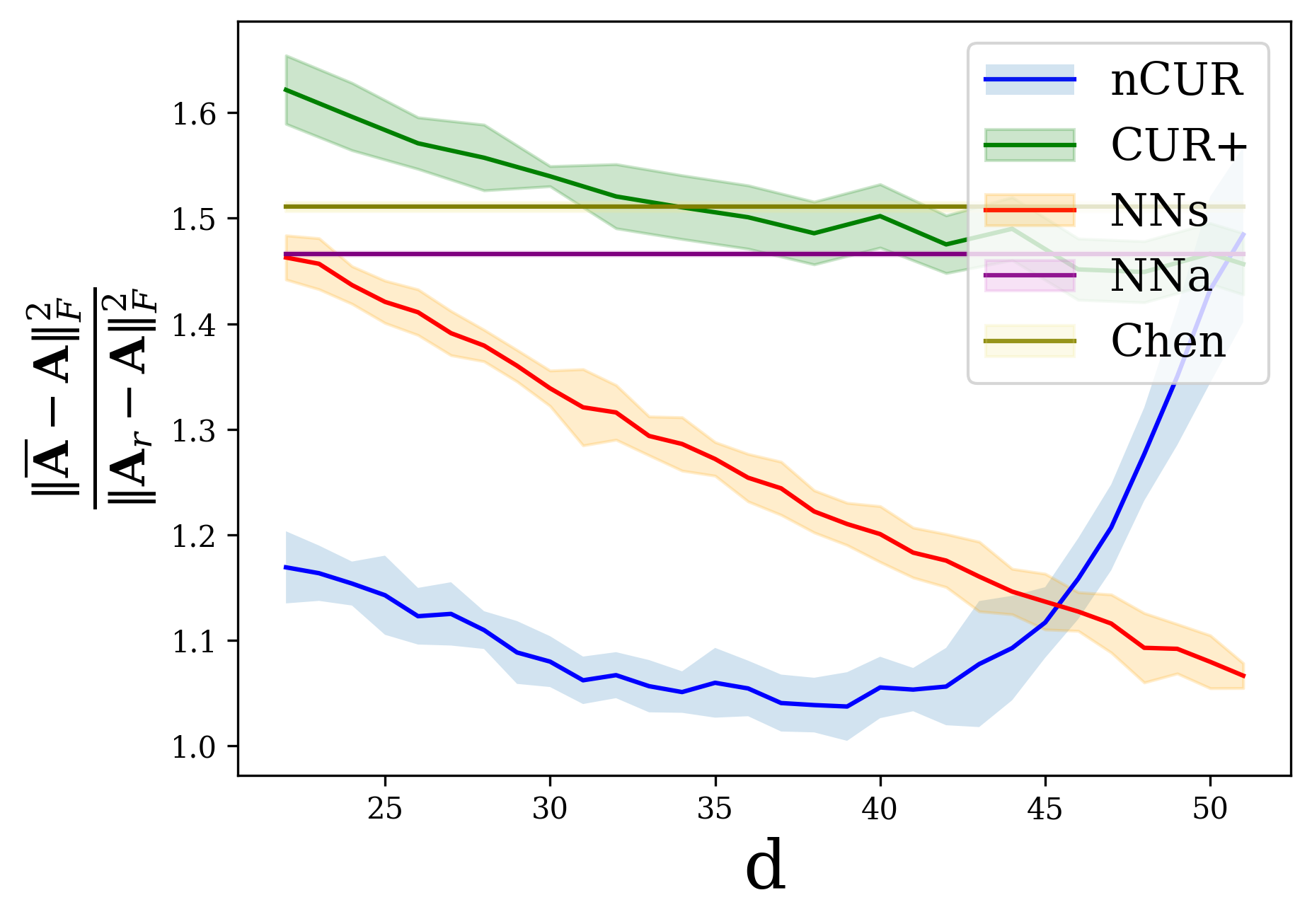} }%
    \subfloat{\includegraphics[width=0.45\linewidth]{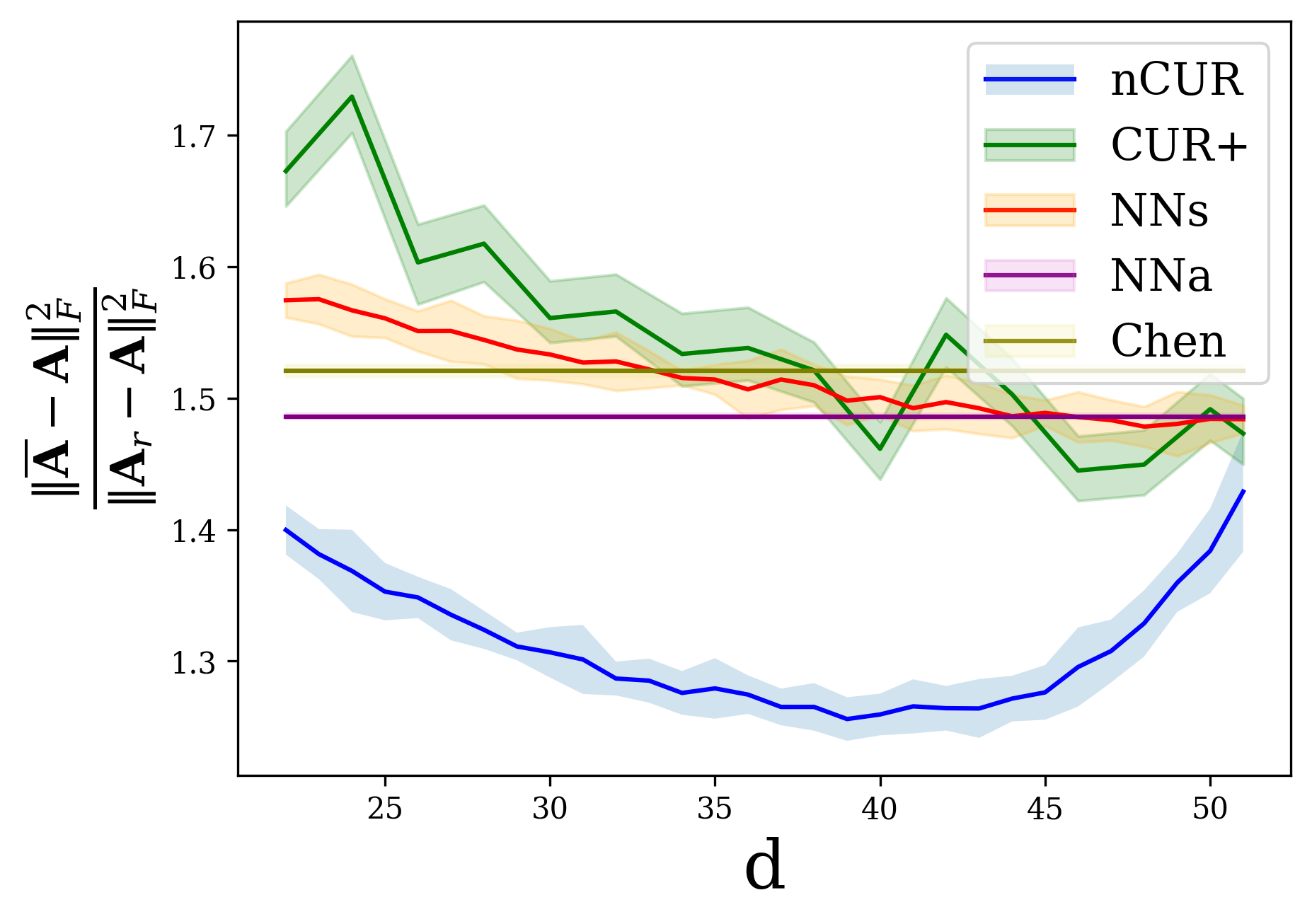} }%
    \caption{Performance on the Jester data set. $\beta=11\%$ for all plots. The noise level is low in the top plots: $\sigma_e^2=0.04,\sigma_c^2=2$. In the bottom two plots, the noise level is higher: $\sigma_e^2=0.25$, $\sigma_c^2=12.5$. As in Figure~\ref{fig:synthetic}, the plots to the left contain the NNs baseline while those to the right do not. Each point in the plots is the average over 10 runs. }%
    \label{fig:jester}%
\end{figure}	
    
Figure~\ref{fig:jester} compares the performance of the baseline methods and noisyCUR on a subset of the Jester dataset of~\cite{JESTER}. Specifically the data set was constructed by extracting the submatrix comprising the $7200$ users who rated all 100 jokes. For each value of $d$, the regularization parameter $\lambda$ of noisyCUR is selected via cross-validation from 200 points logarithmically spaced in the interval $(10, 10^5)$. 

\subsection{Movielens-100K}

\begin{figure}[htb!]%
    \centering
    \subfloat{\includegraphics[width=0.45\linewidth]{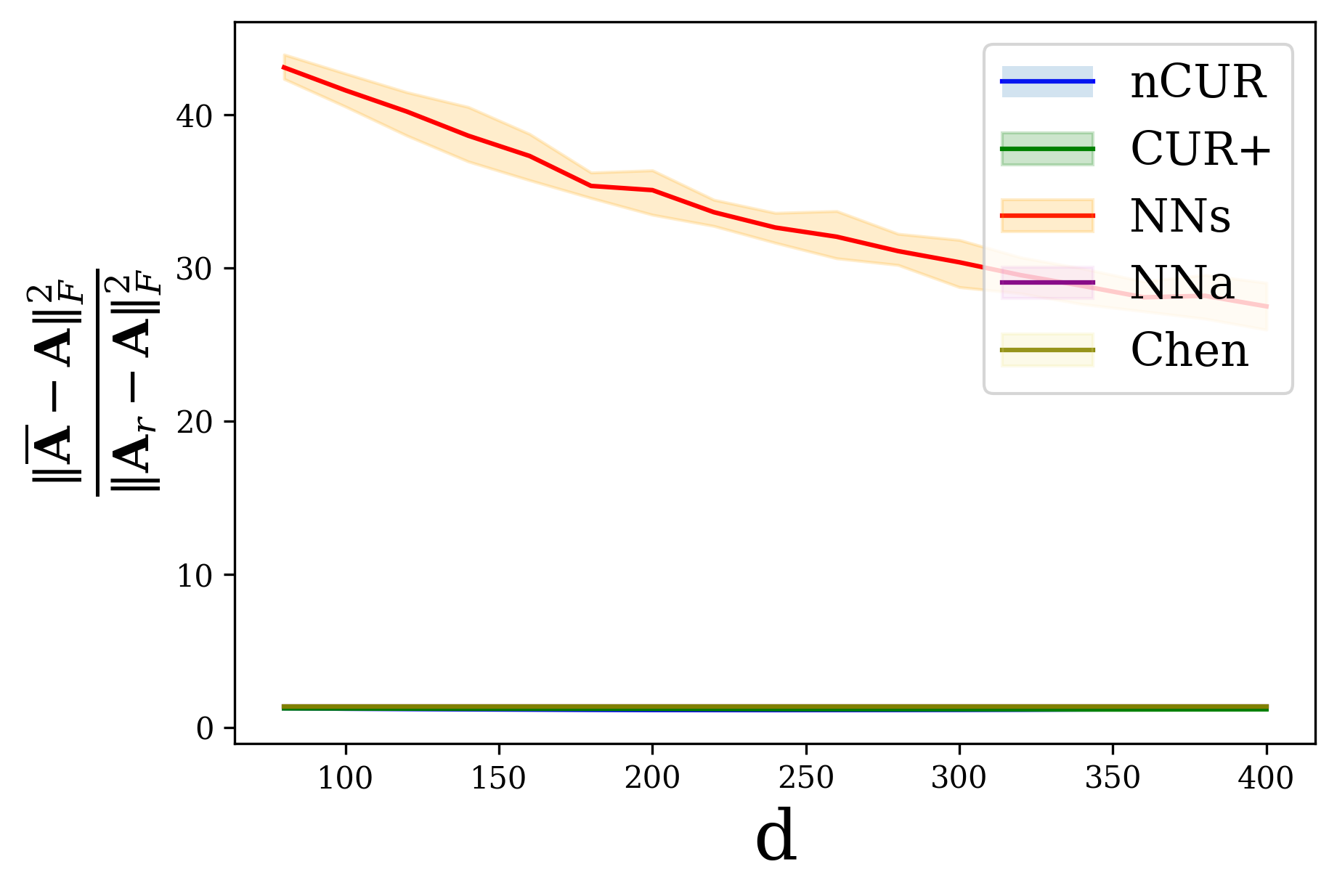} }%
    \subfloat{\includegraphics[width=0.45\linewidth]{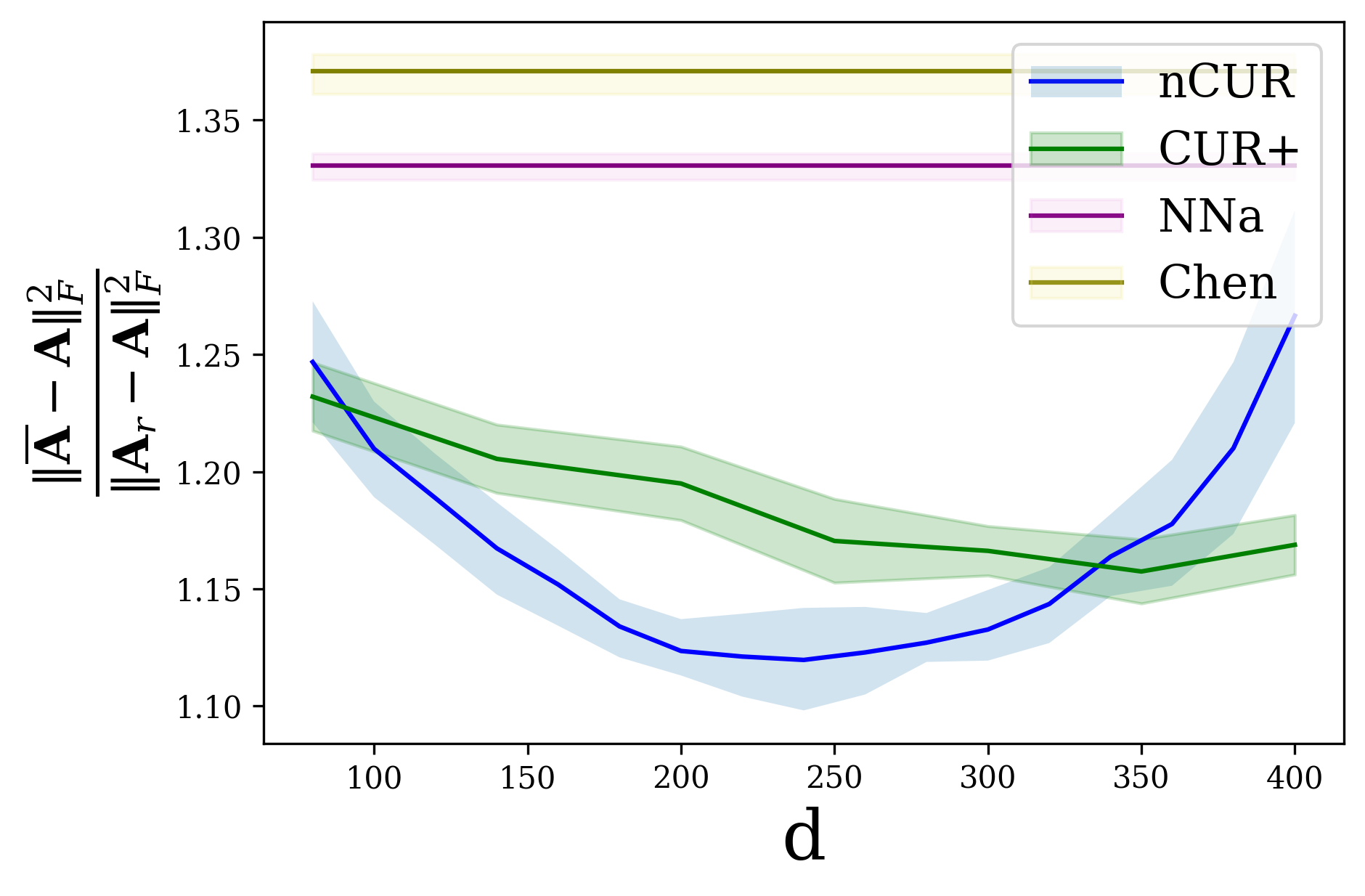} }%
   
    \subfloat{\includegraphics[width=0.45\linewidth]{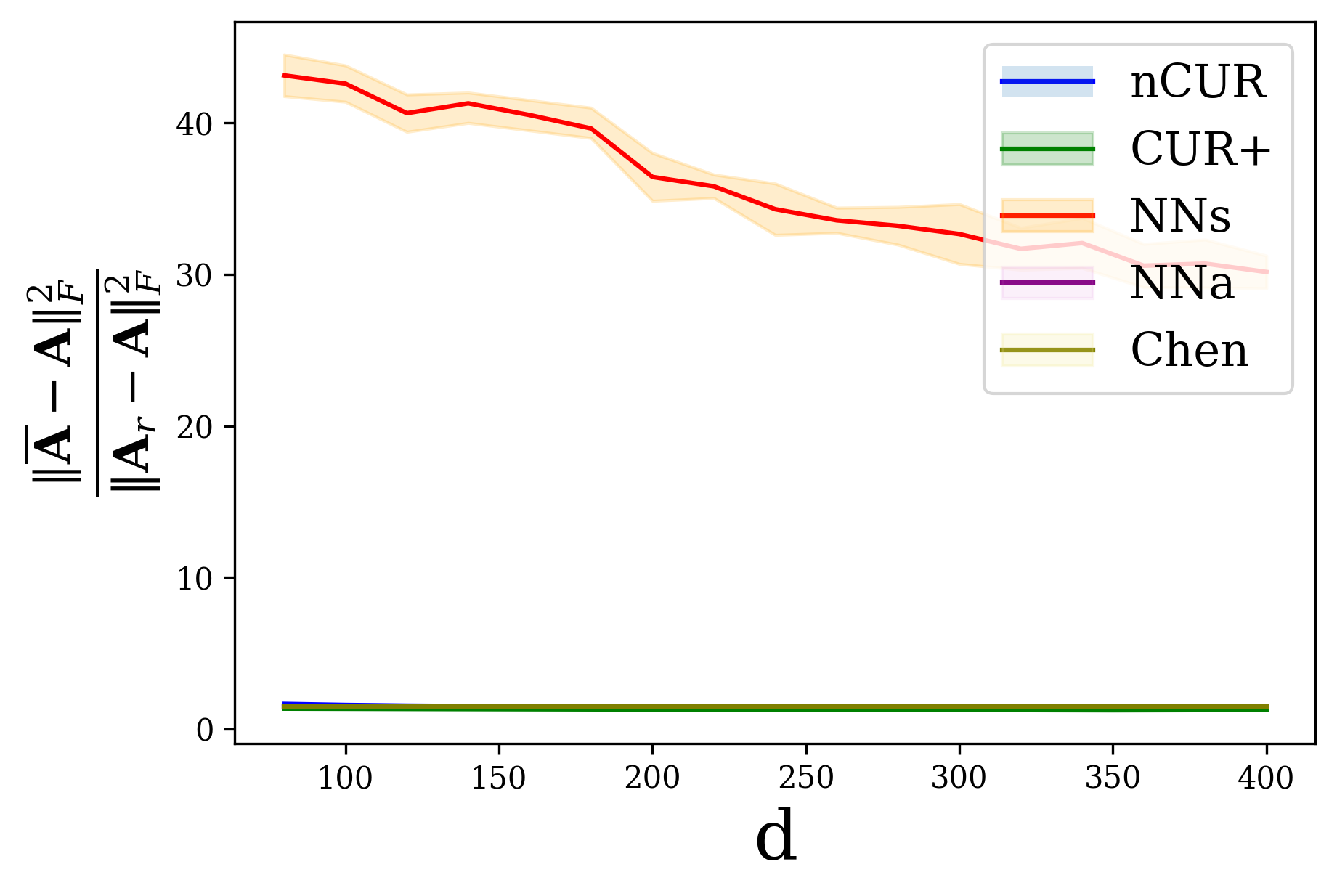} }%
    \subfloat{\includegraphics[width=0.45\linewidth]{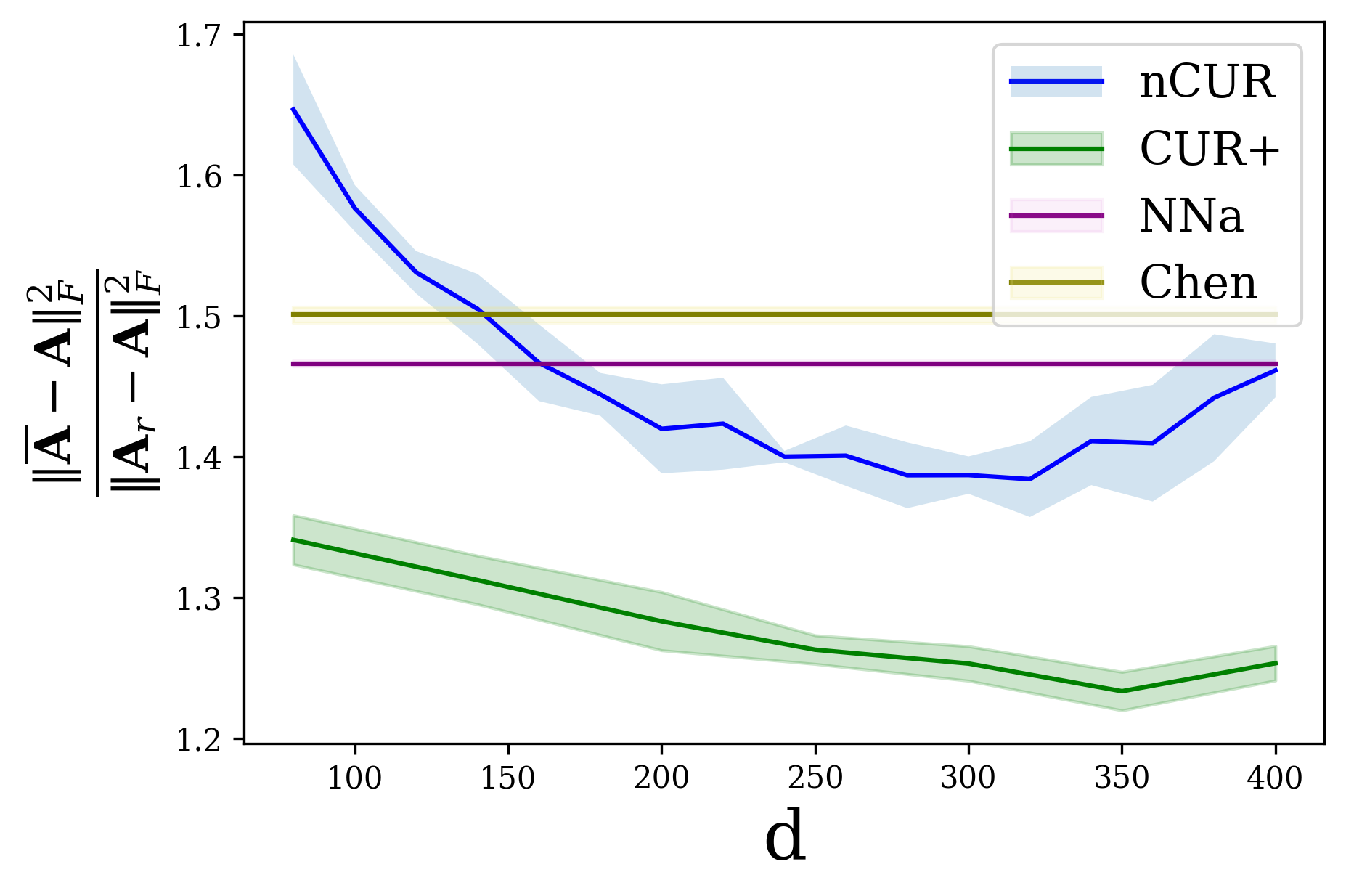} }%
    \caption{Performance on the MovieLens-100K data set. $\beta=10.6\%$ for all plots. The noise level is low in the upper two plots: $\sigma_e^2=0.003,\sigma_c^2=0.06$. In the bottom two plots, the noise level is higher: $\sigma_e^2=0.0225,\sigma_c^2=0.65$. As in Figure~\ref{fig:synthetic}, the plots to the left show the NNs baseline while the plots to the right do not. Each point in the plots is the average of 10 runs.}%
    \label{fig:movie}%
\end{figure}
 
 Figure~\ref{fig:movie} compares the performance of the baseline methods and noisyCUR on the Movielens-100K dataset of~\cite{MOVIE}. The original $1682\times943$ data matrix is quite sparse, so the iterative SVD algorithm of~\cite{SVD} is used to complete it to a low-rank matrix before applying noisyCUR and the three baseline algorithms. For each value of $d$, the parameter $\lambda$ of noisyCUR is cross-validated among 200 points linearly spaced in the interval $[1,200]$. 
 
\subsection{Observations} 
In both the lower and higher noise regimes on all the datasets, noisyCUR exhibits superior performance when compared to the nuclear norm baselines, and in all but one of the experiments, noisyCUR outperforms CUR+, the regression-based baseline. Also, noisyCUR produces a v-shaped error curve, where the optimal approximation error is achieved at the bottom of the v. This convex shape of the performance of noisyCUR with respect to $d$ suggests that there may be a single optimal $d$ given a dataset and the parameters of the two-cost model.

We note a practical consideration that arose in the empirical evaluations: the noisyCUR method is much faster than the nuclear norm algorithms, as they invoke an iterative solver (ADMM was used in these experiments) that computes SVDs of large matrices during each iteration, while noisyCUR solves a single ridge regression problem. 

\section{Conclusion}
\label{sec:conclusion}

This paper introduced the noisyCUR algorithm for solving a budgeted matrix completion problem where the two observation modalities consist of high-accuracy expensive entry sampling and low-accuracy inexpensive column sampling. Recovery guarantees were proven for noisyCUR; these hold even in low-budget regimes where standard nuclear norm completion approaches have no recovery guarantees. noisyCUR is fast, as the main computation involved is a ridge regression. Empirically, it was shown that noisyCUR has lower reconstruction error than standard nuclear norm completion baselines in the low-budget setting. It is an interesting open problem to determine optimal or near-optimal values for the number of column and row samples ($d$ and $s$), given the parameters of the two-cost model ($B$, $p_e$, $p_c$, $\sigma_e$, and $\sigma_c$). Implementations of noisyCUR and the baseline methods used in the experimental evaluations are available at \url{https://github.com/jurohd/nCUR}, along with code for replicating the experiments.

%
%
%
\bibliographystyle{splncs04}

\bibliography{twocost}

\begin{thebibliography}{1}
\providecommand{\url}[1]{\texttt{#1}}
\providecommand{\urlprefix}{URL }
\providecommand{\doi}[1]{https://doi.org/#1}

\bibitem{avron2017sharper}
Avron, H., Clarkson, K.L., Woodruff, D.P.: {Sharper Bounds for Regularized Data
  Fitting}. In: Approximation, Randomization, and Combinatorial Optimization.
  Algorithms and Techniques (APPROX/RANDOM 2017). vol.~81, pp. 27:1--27:22.
  Schloss Dagstuhl--Leibniz-Zentrum fuer Informatik (2017)

\bibitem{chansangiam2015survey}
Chansangiam, P.: A survey on operator monotonicity, operator convexity, and
  operator means. International Journal of Analysis  \textbf{2015} (2015)

\bibitem{ma2015statistical}
Ma, P., Mahoney, M.W., Yu, B.: A statistical perspective on algorithmic
  leveraging. Journal of Machine Learning Research  \textbf{16}(1),  861--911
  (2015)

\bibitem{wainwright2019high}
Wainwright, M.J.: High-dimensional statistics: a non-asymptotic viewpoint.
  Cambridge University Press (2019)

\bibitem{wang2018sketched}
Wang, S., Gittens, A., Mahoney, M.W.: Sketched ridge regression: Optimization
  perspective, statistical perspective, and model averaging. Journal of Machine
  Learning Research  \textbf{18}(218),  1--50 (2018)

\bibitem{wang2016towards}
Wang, S., Zhang, Z., Zhang, T.: Towards more efficient {SPSD} matrix
  approximation and {CUR} matrix decomposition. Journal of Machine Learning
  Research  \textbf{17}(1),  7329--7377 (2016)

\bibitem{woodruff2014sketching}
Woodruff, D.P.: Sketching as a tool for numerical linear algebra. Foundations
  and Trends{\textregistered} in Theoretical Computer Science
  \textbf{10}(1--2),  1--157 (2014)

\end{thebibliography}


\begin{thebibliography}{10}
\providecommand{\url}[1]{\texttt{#1}}
\providecommand{\urlprefix}{URL }
\providecommand{\doi}[1]{https://doi.org/#1}

\bibitem{avron2017sharper}
Avron, H., Clarkson, K.L., Woodruff, D.P.: {Sharper Bounds for Regularized Data
  Fitting}. In: Approximation, Randomization, and Combinatorial Optimization.
  Algorithms and Techniques (APPROX/RANDOM 2017). vol.~81, pp. 27:1--27:22.
  Schloss Dagstuhl--Leibniz-Zentrum fuer Informatik (2017)

\bibitem{balcan2019nonconvex}
Balcan, M.F., Liang, Z., Song, Y., Woodruff, D.P., Zhang, H.: Non-convex matrix
  completion and related problems via strong duality. Journal of Machine
  Learning Research  \textbf{20}(102),  1--56 (2019)

\bibitem{balcan2016noise}
Balcan, M.F., Zhang, H.: Noise-tolerant life-long matrix completion via
  adaptive sampling. In: Advances in Neural Information Processing Systems 29,
  pp. 2955--2963. Curran Associates, Inc. (2016)

\bibitem{candes2010power}
Cand{\'e}s, E.J., Tao, T.: The power of convex relaxation: Near-optimal matrix
  completion. IEEE Transactions on Information Theory  \textbf{56}(5),
  2053--2080 (2010)

\bibitem{CANDESNOISE}
Candes, E.J., Plan, Y.: Matrix completion with noise. Proceedings of the IEEE
  \textbf{98}(6),  925--936 (2010)

\bibitem{YCHEN}
Chen, Y., Bhojanapalli, S., Sanghavi, S., Ward, R.: Coherent matrix completion.
  Proceedings of The 31st International Conference on Machine Learning (ICML)
  pp. 674--682 (2014)

\bibitem{chen2019noisy}
Chen, Y., Chi, Y., Fan, J., Ma, C., Yan, Y.: Noisy matrix completion:
  Understanding statistical guarantees for convex relaxation via nonconvex
  optimization. arXiv preprint, arXiv:1902.07698  (2019)

\bibitem{SVD}
Cho, K., Reyhani, N.: An iterative algorithm for singular value decomposition
  on noisy incomplete matrices. In: The 2012 International Joint Conference on
  Neural Networks (IJCNN). pp.~1--6 (2012)

\bibitem{JESTER}
Goldberg, K., Roeder, T., Gupta, D., Perkins, C.: An iterative algorithm for
  singular value decomposition on noisy incomplete matrices. Information
  Retrieval  \textbf{6}(2),  133--151 (2001)

\bibitem{MOVIE}
Harper, F.M., Konstan, J.A.: The movielens datasets: History and context. ACM
  Transactions on Interactive Intelligent Systems (TiiS) 5  (2015)

\bibitem{hastie2015matrix}
Hastie, T., Mazumder, R., Lee, J.D., Zadeh, R.: Matrix completion and low-rank
  {SVD} via fast alternating least squares. Journal of Machine Learning
  Research  \textbf{16}(1),  3367--3402 (2015)

\bibitem{keshavan2009matrix}
Keshavan, R., Montanari, A., Oh, S.: Matrix completion from noisy entries. In:
  Advances in Neural Information Processing Systems. pp. 952--960 (2009)

\bibitem{krishnamurthy2013low}
Krishnamurthy, A., Singh, A.: Low-rank matrix and tensor completion via
  adaptive sampling. In: Advances in Neural Information Processing Systems 26,
  pp. 836--844. Curran Associates, Inc. (2013)

\bibitem{krishnamurthy2014power}
Krishnamurthy, A., Singh, A.R.: On the power of adaptivity in matrix completion
  and approximation. arXiv preprint arXiv:1407.3619  (2014)

\bibitem{ma2015statistical}
Ma, P., Mahoney, M.W., Yu, B.: A statistical perspective on algorithmic
  leveraging. Journal of Machine Learning Research  \textbf{16}(1),  861--911
  (2015)

\bibitem{recht2011simpler}
Recht, B.: A simpler approach to matrix completion. Journal of Machine Learning
  Research  \textbf{12}(12) (2011)

\bibitem{wainwright2019high}
Wainwright, M.J.: High-dimensional statistics: a non-asymptotic viewpoint.
  Cambridge University Press (2019)

\bibitem{wang2018sketched}
Wang, S., Gittens, A., Mahoney, M.W.: Sketched ridge regression: Optimization
  perspective, statistical perspective, and model averaging. Journal of Machine
  Learning Research  \textbf{18}(218),  1--50 (2018)

\bibitem{wang2016towards}
Wang, S., Zhang, Z., Zhang, T.: Towards more efficient {SPSD} matrix
  approximation and {CUR} matrix decomposition. Journal of Machine Learning
  Research  \textbf{17}(1),  7329--7377 (2016)

\bibitem{woodruff2014sketching}
Woodruff, D.P.: Sketching as a tool for numerical linear algebra. Foundations
  and Trends{\textregistered} in Theoretical Computer Science
  \textbf{10}(1--2),  1--157 (2014)

\bibitem{Cur+}
Xu, M., Jin, R., Zhou, Z.H.: {CUR} algorithm for partially observed matrices.
  In: Proceedings of the 32nd International Conference on Machine Learning.
  Proceedings of Machine Learning Research, vol.~37, pp. 1412--1421. PMLR
  (2015)

\end{thebibliography}
\end{document}


%
\title{Supplementary Material for ``NoisyCUR: An algorithm for two-cost budgeted matrix completion''}
%
%
\author{Dong Hu\inst{1} \and
Alex Gittens\inst{1} \and
Malik Magdon-Ismail\inst{1}}
\authorrunning{D. Hu et al.}

\institute{Rensselaer Polytechnic Institute, Troy, NY 12180, USA \\
\email{\{hud3,gittea\}@rpi.edu, magdon@cs.rpi.edu}}
%
\maketitle              
%

\label{supp-sec:algorithm}

For the convenience of the reader we restate the noisyCUR algorithm and provide a full proof of Theorem~\ref{supp-thm:noisycur_guarantee}. We restate some results from the main body of the paper to minimize back-references.

\begin{algorithm}[h!]
 	\begin{algorithmic}[1]
 		\Require $d$, the number of column samples;
 		$s$, the number of row samples; $\sigma_e$, the noise level of the column samples; $\sigma_c$, the noise level of the row samples; and $\lambda$, the regularization parameter
 		\Ensure $\overline{\matA}$, approximation to $\matA$
		\State $\tilde{\matC} \gets \matC + \matE_c$, where the $d$ columns of $\matC$ are sampled uniformly at random with replacement from $\matA$, and the entries of $\matE_c$ are i.i.d. $\mathcal{N}(0, \sigma_c^2)$.
 		\State $\matU \gets $ an orthonormal basis for the column span of $\tilde{\matC}$
		\State $\bm{\ell}_i \gets \tfrac{1}{2} \|\e_i^T \matU\|_2^2/\|\matU\|_F^2 + \tfrac{1}{2m}$ for $i = 1, \ldots, m$
		\State $\matS \gets \text{SamplingMatrix}(\bm{\ell}, m, s)$, the sketching matrix used to sample $s$ rows of $\matA$
		\State $\matY = \matS^T \matA + \matE_e$, where the entries of $\matE_e$ are i.i.d. $\mathcal{N}(0, \sigma_e^2)$.
		\State $\matX \gets \argmin_{\matZ} \|\matY - \matS^T \tilde{\matC} \matZ\|_F^2 + \lambda \|\matZ\|_F^2$
		\State $\overline{\matA} \gets \tilde{\matC}\matX$
		\State \Return $\overline{\matA}$
	\end{algorithmic}
	\caption{Noisy CUR algorithm for completion of a low-rank matrix $\matA \in \R^{m \times n}$}
	 \label{supp-alg:noisycur}
 \end{algorithm}
 
In Algorithm~\ref{supp-alg:noisycur}, the $\text{SamplingMatrix}(\bm{\ell}, m, s)$ function returns a sketching matrix $\matS \in \R^{m \times s}$ such that $\matS^T \matA$ samples $s$ rows, i.i.d.~with replacement, from $\matA$ with probability proportional to their shrinked leverage scores. Let $p_i = \frac{1}{2} \left(\frac{\ell_i}{\sum_{j=1}^m \ell_j} + \frac{1}{m} \right)$ denote the shrinked leverage probability associated with row $i$. Then $\matS$ has exactly one non-zero per column, the location of that non-zero follows the distribution given by $p_1, \ldots, p_m$, and if $i_j$ is the location of the nonzero in the $j$th column then $S_{i_j, j} = \frac{1}{\sqrt{sp_{i_j}}}$. Thus left multiplication by $\matS^T$ samples and rescales rows from $\matA$ according to their shrinked leverage scores. \cite{ma2015statistical,wang2018sketched}~motivate the use of shrinked leverage score sampling.

The remainder of this supplement develops the following guarantee on the quality of the low-rank approximation returned by Algorithm~\ref{supp-alg:noisycur}.

\begin{thm}
\label{supp-thm:noisycur_guarantee}
Let $\matA \in \R^{m \times n}$ be a rank-$r$ matrix with $\beta$-incoherent column space. Assume that $\matA$ is dense: there is a $c > 0$ such that at least half\footnote{This fraction can be changed, with corresponding modification to the sample complexities $d$ and $s$.} of the entries of $\matA$ satisfy $|a_{ij}| \geq c$.

Fix a precision parameter $\varepsilon \in (0, 1)$ and invoke Algorithm~\ref{supp-alg:noisycur} with 
\[
d \geq \max\left\{ \frac{6 + 2 \varepsilon}{3\varepsilon^2} \beta r \log \frac{r}{\delta},  \frac{8(1 + \delta)^2}{c^2(1 - \varepsilon)\varepsilon} r \kappa_2(\matA)^2 \sigma_c^2\right\} 
\text{ and }
s \geq \frac{6 + 2 \varepsilon}{3\varepsilon^2} 2d \log \frac{d}{\delta}.
\]
The returned approximation satisfies
\[
\|\matA - \overline{\matA}\|_F^2 \leq \left(\gamma + \varepsilon + 40 \frac{\varepsilon}{1 - \varepsilon}\right) \|\matA\|_F^2 + 12 \varepsilon \left(\frac{\sigma_e^2}{\sigma_c^2}\right) d \sigma_r^2(\matA)
\]
with probability at least $0.9 - 2\delta - 2\exp(\tfrac{-(m-r)\delta^2}{2}) - \exp(\tfrac{-sn}{32})$. Here,
\[
\gamma \leq 2 \left(\frac{1 + \varepsilon}{1 - \varepsilon}\right) \left[ \frac{\lambda}{(1 + \varepsilon) \left( \frac{1}{2}\sqrt{m-r} - \sqrt{d} \right)^2\sigma_c^2 + \lambda}\right]^2.
\]
\end{thm}

Theorem~\ref{supp-thm:noisycur_guarantee} is a consequence of two structural results, the proofs of which are deferred. 

The first result states that if $\text{rank}(\matC) = \text{rank}(\matA)$ and the bottom singular value of $\matC$ is large compared to $\sigma_c$, then the span of $\tilde{\matC}$ will contain a good approximation to $\matA$. 

\begin{lem}
	\label{supp-lem:structural_noisy_approx}
	Fix an orthonormal basis $\matU \in \R^{m \times r}$ and consider $\matA \in \R^{m \times n}$ and $\matC \in \R^{m \times d}$ with factorizations $\matA = \matU \matM$ and $\matC = \matU \matW$, where both $\matM$ and $\matW$ have full row rank. Further, let $\tilde{\matC}$ be a noisy observation of $\matC$, that is, let $\tilde{\matC} = \matC + \matG$ where the entries of $\matG$ are i.i.d. $\mathcal{N}(0, \sigma_{c}^2)$. If $\sigmamin(\matC) \geq 2(1 + \delta) \sigma_c \sqrt{m/\varepsilon}$, then 
	\[
		\|(\matI - \matP_{\tilde{\matC}})\matA\|_F^2 \leq \varepsilon \|\matA\|_F^2
		\]
	with probability at least $1 - \exp\left(\frac{-m\delta^2}{2}\right)$.
\end{lem}

Recall the definition of a $(1\pm \varepsilon)$-subspace embedding.

\begin{defn}[Subspace embedding~\cite{woodruff2014sketching}]
	Let $\matA \in \R^{m \times n}$ and fix $\varepsilon \in (0,1)$. A matrix $\matS \in \R^{m \times s}$ is a $(1 \pm \varepsilon)$-subspace embedding for $\matA$ if
	\[
		(1 - \varepsilon) \|\x\|_2^2 \leq \|\matS^T\x\|_2^2 \leq (1 + \varepsilon) \|\x\|_2^2
	\]
	for all vectors $\x$ in the span of $\matA,$ or equivalently, if
	\[
	(1 - \varepsilon) \matA^T \matA \preceq \matA^T \matS \matS^T \matA \preceq (1 + \varepsilon) \matA^T \matA.
	\]
	Often we will use the shorthand ``subspace embedding'' for $(1 \pm \varepsilon)$-subspace embedding.
\end{defn}
We note that the $\matS$ returned by $\text{SamplingMatrix}(\bm{\ell}, m, s)$ in Algorithm~\ref{supp-alg:noisycur} is a subspace embedding with high probability~\cite[Appendix A.1.1]{wang2018sketched}.

The second structural result is a novel bound on the error of sketching using a subspace embedding to reduce the cost of ridge regression, when the target is noisy.

\begin{cor}
  \label{supp-cor:structural_sketched_proximal_regularized_LS}
  Let $\tilde{\matC} \in \R^{m \times d}$, where $d \leq m$, and $\tilde{\matA} = \matA + \matE$ be matrices, and let $\matS$ be an $(1 \pm \varepsilon)$-subspace embedding for $\tilde{\matC}$. If
	\[
    \matX = \argmin_{\matZ} \|\matS^T(\tilde{\matA} - \tilde{\matC} \matZ)\|_F^2 + \lambda \|\matZ\|_F^2,
	\]
	then 
	\begin{multline*}
    \|\matA - \tilde{\matC} \matX\|_F^2 \leq \|(\matI - \matP_{\tilde{\matC}})\matA\|_F^2 + \gamma \|\matP_{\tilde{\matC}} \matA\|_F^2
       + \frac{4}{1 - \varepsilon}\|\matS^T \matE\|_F^2 + \frac{4}{1 - \varepsilon}\|\matS^T (\matI - \matP_{\tilde{\matC}})\matA\|_F^2,
	\end{multline*}
  where $\gamma =  2 \left(\frac{1 + \varepsilon}{1 - \varepsilon} \right) \left( \frac{\lambda}{(1 + \varepsilon) \sigma_d(\tilde{\matC})^2 + \lambda} \right)^{2}.$
\end{cor}

Corollary~\ref{supp-cor:structural_sketched_proximal_regularized_LS} differs significantly from prior results on the error in sketched ridge regression, e.g.~\cite{avron2017sharper,wang2018sketched}, in that: (1) it bounds the \emph{reconstruction error} rather than the \emph{ridge regression objective}, and (2) it considers the impact of noise in the target. This result follows from a more general result on sketched noisy proximally regularized least squares problems, stated as Theorem~\ref{supp-thm:sketched_regularized_LS} below.

Together with standard properties of Gaussian noise and subspace embeddings, these two results deliver Theorem~\ref{supp-thm:noisycur_guarantee}.

\begin{proof}[Proof of Theorem~\ref{supp-thm:noisycur_guarantee}]
The NoisyCUR algorithm first forms the noisy column samples $\tilde{\matC} = \matC + \matE_c$, where $\matC = \matA \matM$.  The random matrix $\matM \in \R^{n \times d}$ selects $d$ columns uniformly at random with replacement from the columns of $\matA$, and the entries of $\matE_c \in \R^{m \times d}$ are i.i.d.\ $\mathcal{N}(0, \sigma_c^2)$. It then solves the sketched regression problem
\[
\matX = \argmin_{\matZ} \|\matS^T(\tilde{\matA} - \tilde{\matC} \matZ)\|_F^2 + \lambda \|\matZ\|_F^2,
\]
and returns the approximation $\overline{\matA} = \tilde{\matC} \matX$. Here $\tilde{\matA} = \matA + \matE_e$, where $\matE_e \in \R^{m \times n}$ comprises i.i.d $\mathcal{N}(0, \sigma_e^2)$ entries, and the sketching matrix $\matS \in \R^{m \times s}$ samples $s$ rows using the shrinked leverage scores of $\tilde{\matC}$.

By~\cite[Appendix A.1.1]{wang2018sketched}, $\matS$ is a subspace embedding for $\tilde{\matC}$ with failure probability at most $\delta$ when $s$ is as specified. Thus Corollary~\ref{supp-cor:structural_sketched_proximal_regularized_LS} applies and gives that
\begin{align*}
    \|\matA - \tilde{\matC} \matX\|_F^2 & \leq \|(\matI - \matP_{\tilde{\matC}})\matA\|_F^2 + \gamma^\prime \|\matP_{\tilde{\matC}} \matA\|_F^2 \\
      & \quad \quad + \frac{4}{1 - \varepsilon}\|\matS^T \matE\|_F^2 + \frac{4}{1 - \varepsilon}\|\matS^T (\matI - \matP_{\tilde{\matC}})\matA\|_F^2 \\
       & = T_1 + T_2 + T_3 + T_4,
\end{align*}
where $\gamma^\prime =  2 \left(\frac{1 + \varepsilon}{1 - \varepsilon} \right) \left( \frac{\lambda}{(1 + \varepsilon) \sigma_d(\tilde{\matC})^2 + \lambda} \right)^{2}.$ We now bound the four terms $T_1$, $T_2$, $T_3$, and $T_4$. 

To bound $T_1$, note that by~\cite[Lemma 13]{wang2016towards}, the matrix $\sqrt{\tfrac{n}{d}} \matM$ is a subspace embedding for $\matA^T$ with failure probability at most $\delta$ when $d$ is as specified. This gives the semidefinite inequality $\frac{n}{d} \matC \matC^T = \frac{n}{d} \matA \matM \matM^T \matA^T \succeq (1 - \varepsilon) \matA\matA^T$, which in turn gives that
\begin{align*}
\sigma_{r}^2(\matC) & \geq (1 - \varepsilon) \frac{d}{n} \sigma_r^2(\matA) 
\geq \frac{8(1+\delta)^2}{c^2 \varepsilon} \frac{r}{n} \|\matA\|_2^2 \sigma_c^2 \\
& \geq \frac{8(1+\delta)^2}{c^2 \varepsilon n} \|\matA\|_F^2 \sigma_c^2 \geq 4(1+ \delta)^2\frac{m}{\varepsilon} \sigma_c^2. 
\end{align*}
The second inequality holds because 
\begin{equation}
\label{supp-eqn:lowerbound_bottom_singval}
d \geq \frac{8(1 + \delta)^2}{c^2(1 - \varepsilon)\varepsilon} r \kappa_2(\matA)^2 \sigma_c^2 \quad\text{implies}\quad 
\sigma_r^2(\matA) \geq \frac{8(1 + \delta)^2}{c^2(1 - \varepsilon)\varepsilon} \frac{r}{d} \|\matA_2\|^2 \sigma_c^2
\end{equation}
The third inequality holds because $r \|\matA\|_2^2$ is an overestimate of $\|\matA_F\|_2^2$. The final inequality holds because the denseness of $\matA$ implies that $\|\matA\|_F^2 \geq \tfrac{1}{2} c^2 mn$.

Note also that the span of $\matC = \matA \matM$ is contained in that of $\matA$, and since $\frac{n}{d} \matC \matC^T \succeq (1 - \varepsilon) \matA \matA^T$, in fact $\matC$ and $\matA$ have the same rank and therefore span the same space. Thus the necessary conditions to apply Lemma~\ref{supp-lem:structural_noisy_approx} are satisfied, and as a result, we find that
\[
T_1 \leq \varepsilon \|\matA\|_F^2
\]
with failure probability at most $\exp(-\tfrac{m\delta^2}{2}).$

Next we bound $T_2$. Observe that $\|\matP_{\tilde{\matC}} \matA\|_F^2 \leq \|\matA\|_F^2$. Further, by Lemma~\ref{supp-lem:perturbed_bottom_sing_val},
\[
\sigma_d(\tilde{\matC}) \geq \left(\frac{1}{2}\sqrt{m-r} - \sqrt{d}\right)^2 \sigma_c^2
\]
with failure probability at most $\exp(\tfrac{-(m-r)\delta^2}{2})$. This allows us to conclude that
\[
T_2 \leq \gamma \|\matA\|_F^2,
\]
where $\gamma$ is as specified in the statement of this theorem.

To bound $T_3$, we write
\begin{align*}
T_3 & = \frac{4}{1-\varepsilon}\|\matS^T \matP_{\matS} \matE\|_F^2 \leq \frac{4}{1-\varepsilon}\|\matS\|_2^2 \|\matP_{\matS} \matE\|_F^2 \\
& \leq \frac{8}{1-\varepsilon} \frac{m}{s} \|\matQ^T \matE\|_F^2,
\end{align*}
where $\matQ$ is an orthonormal basis for the span of $\matS$.
The last inequality holds because~\cite[Appendix A.1.2]{wang2018sketched} shows that $\|\matS\|_2^2 \leq 2\tfrac{m}{s}$ always. Finally, note that $\matQ$ has at most $s$ columns, so in the worst case $\matQ^T\matE$ comprises $sn$ i.i.d.\ $\mathcal{N}(0, \sigma_e^2)$ entries. A standard concentration bound for $\chi^2$ random variables with $sn$ degrees of freedom~\cite[Example 2.11]{wainwright2019high} guarantees that
\[
\|\matQ^T \matE\|_F^2 \leq \frac{3}{2} sn \sigma_e^2 
\]
with failure probability at most $\exp(\tfrac{-sn}{32})$. We conclude that, with the same failure probability,
\[
T_3 \leq \frac{12}{1- \varepsilon} m n \sigma_e^2. 
\]
Now recall \eqref{supp-eqn:lowerbound_bottom_singval}, which implies that
\begin{align*}
\varepsilon(1-\varepsilon) d\sigma_r^2(\matA) 
& \geq \frac{8(1 + \delta)^2}{c^2} r \|\matA_2\|^2 \sigma_c^2 
\geq \frac{8(1 + \delta)^2}{c^2} \|\matA\|_F^2 \sigma_c^2 \\
& \geq 4(1 + \delta)^2 m n \sigma_c^2 \geq m n \sigma_c^2.
\end{align*}
It follows from the last two displays that
\[
T_3 \leq 12 \varepsilon \left(\frac{\sigma_e^2}{\sigma_c^2} \right) d \sigma_r^2(\matA).
\]

The bound for $T_4$ is an application of Markov's inequality. In particular, it is readily verifiable that $\mathbb{E}[\matS\matS^T] = \matI$, which implies that
\[
\mathbb{E}T_4 = \frac{4}{1- \varepsilon}\|(\matI - \matP_{\tilde{\matC}})\matA \|_F^2 = \frac{4}{1- \varepsilon}T_1 \leq \frac{4\varepsilon}{1- \varepsilon}\|\matA\|_F^2.
\]
The final inequality comes from the bound $T_1 \leq \varepsilon \|\matA\|_F^2$ that was shown earlier. Thus, by Markov's inequality, 
\[
T_4 \leq \frac{40\varepsilon}{1- \varepsilon}\|\matA\|_F^2
\]
with failure probability at most $0.1$.

Collating the bounds for $T_1$ through $T_4$ and their corresponding failure probabilities gives the claimed result.
\end{proof}

\begin{proof}[Proof of Lemma~\ref{supp-lem:structural_noisy_approx}]
	Consider the matrix $\matZ = \tilde{\matC} \matW^\dagger \matM$ in the span of $\tilde{\matC}$. Because $\matW$ has full row-rank, $\matZ = \matU \matW \matW^\dagger \matM + \matG \matW^\dagger\matM = \matA + \matG\matW^\dagger\matM$. Thus 
	\begin{align*}
		\Prob\left( \|(\matI - \matP_{\tilde{\matC}})\matA\|_F^2 \geq \varepsilon \|\matA\|_F^2 \right) & \leq \Prob\left(\|\matZ - \matA\|_F^2 \geq \varepsilon \|\matA\|_F^2\right) \\
		& = \Prob\left( \|\matG \matW^\dagger \matM\|_F^2 \geq \varepsilon \|\matM\|_F^2 \right) \\
		& \leq \Prob\left( \|\matG\|_2^2 \|\matW^\dagger\|_2^2 \|\matM\|_F^2 \geq \varepsilon \|\matM\|_F^2 \right) \\
		& = \Prob\left(\|\matG\|_2^2 \geq \varepsilon \|\matW^\dagger\|_2^{-2}\right) \\
		& = \Prob\left(\|\matG\|_2^2 \geq \varepsilon \sigmamin(\matW)^2 \right) \\
		& \leq \Prob\left(\|\matG\|_2 \geq 2 (1 + \delta) \sigma_c \sqrt{m}\right) \\
		& \leq \Prob\left(\|\matG\|_2 \geq \sigma_c \left[ (1 + \delta) \sqrt{m} + \sqrt{d}\right] \right)\\
		& \leq \exp\left(\frac{-m\delta^2}{2}\right).
	\end{align*}
The last inequality is a consequence of the concentration of the singular values of Gaussian matrices~\cite[Theorem 6.1]{wainwright2019high}. 
\end{proof}

The next result quantifies the impact of sketching on noisy proximally regularized least squares problems. 

\begin{thm}
	\label{supp-thm:sketched_regularized_LS}
  Fix a matrix $\matA \in \R^{m \times d}$, where $d \leq m$, vectors $\b$, $\e$, and $\x$, and a regularization parameter $\lambda > 0$. 
  Let $\matS$ be a subspace embedding for the matrix $\matA$ and consider the objective 
  functions $f : \z \mapsto \|\matA \z - \b\|_2^2$ and $f_{\matS, \e} : \z \mapsto \|\matS^T(\matA \z - \b - \e)\|_2^2$. If 
	\[
    \x_{\matS,\e}^\lambda = \argmin_{\z} f_{\matS, \e} (\z) + \lambda \|\z - \x\|_2^2,
		\]
then 
	\[
    f(\x_{\matS, \e}^\lambda) \leq f(\matA^\dagger \b) + \gamma (f(\x) - f(\matA^\dagger \b)) + \frac{4}{1 - \varepsilon} \|\matS^T \e\|_2^2 + \frac{4}{1 - \varepsilon} \|\matS^T (\matI - \matP_{\matA})\b\|_2^2,
		\]
where $\gamma = 2 \left(\frac{1 + \varepsilon}{1 - \varepsilon} \right) \left( \frac{\lambda}{(1 + \varepsilon) \sigma_d(\matA)^2 + \lambda} \right)^{2}.$
\end{thm}

\begin{rem}
  The gap $f(\x)$ - $f(\matA^\dagger \b)$ between the error at the proximal point and the optimal error is a measure of the goodness of the proximal point $\x$.
  Thus this theorem shows that the error $f(\x_{\matS, \e}^\lambda)$ of the resulting estimate comes from readily identifiable sources:
  the best possible error $f(\matA^\dagger \b)$, a measure of the quality of the goodness of the proximal point, and measures of how sketching interacts with the noise
  in the target and the portion of the target than cannot be captured in the column space of $\matA$.
\end{rem}

We defer the proof of Theorem~\ref{supp-thm:sketched_regularized_LS} until after the proof of Corollary~\ref{supp-cor:structural_sketched_proximal_regularized_LS}.

\begin{proof}[Proof of Corollary~\ref{supp-cor:structural_sketched_proximal_regularized_LS}]
	This result follows by applying Theorem~\ref{supp-thm:sketched_regularized_LS} to the separable least squares problems defining the columns of $\matX$.
  Namely, let $\x_i$, $\a_i$, and $\e_i$ denote the $i$th columns of $\matX$, $\matA$, and $\matE$ respectively, then 
  \[
    \x_i = \argmin_\z \|\matS^T(\a_i + \e_i - \tilde{\matC} \z)\|_2^2 + \lambda\|\z - 0 \|_2^2,
  \]
  and Theorem~\ref{supp-thm:sketched_regularized_LS} guarantees that
  \begin{align*}
    \|\a_i - \tilde{\matC} \x_i \|_2^2  & \leq \|(\matI - \matP_{\tilde{\matC}})\a_i\|_2^2 
     + \gamma( \|\a_i\|_2^2 - \|(\matI - \matP_{\tilde{\matC}})\a_i\|_2^2)  \\
     &  \quad \quad + \frac{4}{1-\varepsilon}\|\matS^T \e_i\|_2^2 
     + \frac{4}{1 -\varepsilon}\|\matS^T (\matI - \matP_{\tilde{\matC}})\a_i\|_2^2 \\
     & =  \|(\matI - \matP_{\tilde{\matC}})\a_i\|_2^2 + \gamma( \|\matP_{\tilde{\matC}}\a_i\|_2^2)  \\
     & \quad \quad + \frac{4}{1-\varepsilon}\|\matS^T \e_i\|_2^2 
     + \frac{4}{1 -\varepsilon}\|\matS^T (\matI - \matP_{\tilde{\matC}})\a_i\|_2^2.
  \end{align*}
  Summing over the errors in estimating all the columns of $\matA$ gives the claimed result.
\end{proof}

We use the following technical lemma in the proof of Theorem~\ref{supp-thm:sketched_regularized_LS}. The proof is deferred.
\begin{lem}
  \label{supp-lem:awkward_sketch}
  Fix a matrix $\matA \in \R^{m \times d}$, where $d \leq m$,  and let $\matS$ be a subspace embedding for $\matA$. If $\lambda > 0$ and $\v \in \R^d$, then
  \[
    \|\matA(\matA^T \matS \matS^T \matA + \lambda \matI)^{-1} \v\|_2^2 \leq \frac{1 + \varepsilon}{1 - \varepsilon} \left( \frac{1}{(1 + \varepsilon)\sigma_d(\matA)^2 + \lambda} \right)^2 \|\matA \v\|_2^2.
  \]
\end{lem}

\begin{proof}[Proof of Theorem~\ref{supp-thm:sketched_regularized_LS}]
Solving the noisy sketched proximally regularized problem gives the coefficients
  \begin{align}
    \label{supp-eqn:sketched_coeff_expression}
    \x_{\matS, \e}^\lambda & = \argmin_{\z} [f_{\matS, \e}(\z) + \lambda \|\z - \x\|_2^2 ] \notag\\
     & = (\matA^T \matS \matS^T \matA + \lambda \matI)^{-1}(\matA^T \matS\matS^T (\b  + \e) + \lambda \x) \notag\\
     & = (\matA^T \matS \matS^T \matA + \lambda \matI)^{-1}(\matA^T \matS\matS^T (\matP_{\matA} \b + (\matI - \matP_{\matA}) \b  + \e) + \lambda \x) \notag \\
     & = (\matA^T \matS \matS^T \matA + \lambda \matI)^{-1}(\matA^T \matS\matS^T \matA \matA^\dagger \b + \lambda \x) + \n_{\matS},
  \end{align}
where $\n_{\matS} = (\matA^T \matS \matS^T \matA + \lambda \matI)^{-1} \matA^T \matS \matS^T (\e + (\matI - \matP_{\matA}) \b).$
 
The error of using these coefficients to recover the true target relates to the error of the least-squares coefficients $\matA^\dagger \b$ as follows: 
\begin{align*}
  f(\x_{\matS, \e}^\lambda) & = \|\matA \x_{\matS, \e}^\lambda - \b\|_2^2 = \|\matA \x_{\matS, \e}^\lambda - \matA \matA^{\dagger}\b\|_2^2 + \|(\matI - \matP_{\matA})\b\|_2^2 \\
   & =  \|\matA \x_{\matS, \e}^\lambda - \matA \matA^{\dagger}\b\|_2^2 + f(\matA^\dagger \b).
\end{align*}
Insert the expression \eqref{supp-eqn:sketched_coeff_expression} for $\x_{\matS,\e}^\lambda$ into this expression and exploit the invertibility of $\matA^T \matS \matS^T \matA + \lambda \matI$ to observe that
\begin{align*}
  f(\x_{\matS, \e}^\lambda) & = \|\matA (\matA^T \matS \matS^T \matA + \lambda \matI)^{-1} (\matA \matS \matS^T \matA \matA^\dagger \b + \lambda \x) + \matA \n_{\matS} \\
   & \quad \quad - \matA (\matA^T \matS \matS^T \matA + \lambda \matI)^{-1} (\matA^T \matS \matS^T \matA + \lambda \matI) \matA^\dagger \b\|_2^2 
   + f(\matA^\dagger \b) \\
   & = \|\lambda \matA(\matA^T \matS \matS^T \matA + \lambda \matI)^{-1} (\x - \matA^\dagger \b) + \matA \n_{\matS} \|_2^2 + f(\matA^\dagger \b).
\end{align*}
A standard algebraic manipulation on the first term, followed by an application of Lemma~\ref{supp-lem:awkward_sketch} gives
\begin{align*}
  \label{supp-eqn:sketched_loss_intermed}
  f(\x_{\matS, \e}^\lambda) & \leq 2\lambda^2 \|\matA(\matA^T \matS \matS^T \matA + \lambda \matI)^{-1} (\x - \matA^\dagger \b) \|_2^2 + 2\|\matA \n_{\matS} \|_2^2 + f(\matA^\dagger \b) \notag \\
  & \leq \gamma \|\matA \x - \matP_{\matA} \b\|_2^2 + f(\matA^\dagger \b) + 2\|\matA \n_{\matS}\|_2^2,
\end{align*}
where for notational convenience we have introduced 
\[
  \gamma = 2 \left(\frac{1 + \varepsilon}{1 - \varepsilon} \right) \left( \frac{\lambda}{(1 + \varepsilon) \sigma_d(\matA)^2 + \lambda} \right)^{2}.
\]

Now recall that $f(\x) = \|\matA - \matP_{\matA}\b\|_2^2 + f(\matA^\dagger \b)$; this implies that
\begin{equation}
  \label{supp-eqn:almostthere_loss}
  f(\x_{\matS, \e}^\lambda) \leq \gamma f(\x) +  (1 - \gamma) f(\matA^\dagger \b) + 2 \|\matA \n_{\matS}\|_2^2.
\end{equation}

All that remains is to estimate the noise term $\|\matA \n_{\matS}\|_2^2$ in \eqref{supp-eqn:almostthere_loss}. To do so, we first 
exploit the fact that $\matS$ is a subspace embedding for $\matA$:
\begin{align*}
  \|\matA \n_{\matS}\|_2^2 & \leq \frac{1}{1 - \varepsilon} \|\matS^T \matA \n_{\matS}\|_2^2 \\
  & \leq \frac{1}{1 - \varepsilon} \|\matS^T \matA (\matA^T \matS \matS^T \matA + \lambda \matI)^{-1} \matA^T \matS\|_2^2 \|\matS^T(\e + (\matI - \matP_{\matA})\b)\|_2^2 \\
  & = \nu \|\matS^T(\e + (\matI - \matP_{\matA})\b)\|_2^2,
\end{align*}
where $\nu =  (1 - \varepsilon)^{-1} \|\matS^T \matA (\matA^T \matS \matS^T \matA + \lambda \matI)^{-1} \matA^T \matS\|_2^2$.
To estimate $\nu$, observe that 
since $\matA^T \matS \matS^T \matA + \lambda \matI \succeq \matA^T \matS \matS^T \matA$, the inequality 
\[
  \matS^T \matA \mat(\matA^T \matS \matS^T \matA + \lambda \matI)^{-1} \matA^T \matS \preceq \matS^T \matA (\matA^T \matS \matS^T \matA)^\dagger \matA^T \matS = \matP_{\matS^T \matA} 
\]
holds. It follows that
\[
  \nu \leq \frac{1}{1 - \varepsilon} \|\matP_{\matS^T \matA}\|_2^2 = \frac{1}{1 - \varepsilon},
\]
consequently
\begin{align*}
  \|\matA \n_{\matS} \|_2^2 & \leq \frac{1}{1 - \varepsilon} \|\matS^T (\e + (\matI - \matP_{\matA}) \b)\|_2^2 \\ 
  & \leq \frac{2}{1 - \varepsilon} \left( \|\matS^T \e\|_2^2 + \|\matS^T (\matI - \mat{P}_{\matA}) \b\|_2^2 \right).
\end{align*}

\end{proof}

\begin{proof}[Proof of Lemma~\ref{supp-lem:awkward_sketch}]
Since $\matS$ is a subspace embedding for $\matA$, we have that
  \begin{align}
    \label{supp-eqn:monotonicity1}
    \|\matA(\matA^T \matS \matS^T \matA + \lambda \matI)^{-1} \v\|_2^2 & \leq \frac{1}{1 - \varepsilon} \|\matS^T \matA(\matA^T \matS \matS^T \matA + \lambda \matI)^{-1} \v\|_2^2 \notag \\
     & = \frac{1}{1 - \varepsilon} \v^T g(\matA^T \matS \matS^T \matA) \v,
  \end{align}
where the function $g : (0, \infty) \rightarrow (0, \infty)$ is given by $g : t \mapsto \frac{t}{(t + \lambda)^2}$.
  
  Notice that $1/g : t \mapsto t + 2\lambda + \lambda^2/t$. The first two terms are trivially operator convex, and the last term is also operator convex, as $t^{-1}$ is operator convex~\cite[Proposition 2]{chansangiam2015survey}.
\cite[Corollary 12(iv)]{chansangiam2015survey} states that a function $h: (0, \infty) \rightarrow (0, \infty)$ is operator monotone iff $1/h$ is operator convex, so we conclude that $g$ is operator monotone.

  Because $\matS$ is a subspace embedding for $\matA$, it is the case that $\matA^T \matS \matS^T \matA \preceq (1 + \varepsilon) \matA^T \matA$.  The monotonicity of $g$ then implies that $g(\matA^T \matS \matS^T \matA) \preceq g( (1 + \varepsilon) \matA^T \matA)$, and as a consequence,
\begin{align}
  \label{supp-eqn:monotonicity2}
  \v^T g(\matA^T \matS \matS^T \matA) \v 
   & \leq \v^T g( (1 + \varepsilon )\matA^T \matA) \v \notag\\ 
  &  = \frac{1}{1+ \varepsilon} \v^T \left(\matA^T \matA + \frac{\lambda}{1 + \varepsilon} \matI \right)^{-1} \matA^T \matA  \left(\matA^T \matA + \frac{\lambda}{1 + \varepsilon} \matI \right)^{-1} \v \notag \\
  &  = \frac{1}{1+\varepsilon} \|\matA(\matA^T \matA + \lambda^\prime \matI)^{-1} \v\|_2^2,
\end{align}
where for brevity's sake we have introduced the notation $\lambda^\prime = \tfrac{\lambda}{1 + \varepsilon}$.

  It follows from \eqref{supp-eqn:monotonicity1} and \eqref{supp-eqn:monotonicity2} that
  \begin{align*}
    \|\matA(\matA^T \matS \matS^T \matA + \lambda \matI)^{-1} \v\|_2^2 & \leq \frac{1}{1 - \varepsilon^2} \|\matA(\matA^T \matA + \lambda^\prime \matI)^{-1} \v\|_2^2 \\
     & = \frac{1}{1 - \varepsilon^2} \|(\matA \matA^T + \lambda^\prime \matI)^{-1} \matA \v\|_2^2 \\
     & \leq \frac{1}{1 - \varepsilon^2} \left( \frac{1}{\sigma_d(\matA)^2 + \lambda^\prime} \right)^2 \|\matA \v\|_2^2.
  \end{align*}
  The first equality is a standard algebraic identity, and the last inequality is justified by identifying the norm of $(\matA\matA^T + \lambda^\prime \matI)^{-1}$. Inserting
   the definition of $\lambda^\prime$ and simplifying delivers the claimed result,
   \[
     \|\matA(\matA^T \matS \matS^T \matA + \lambda \matI)^{-1} \v\|_2^2 \leq \frac{1 + \varepsilon}{1 - \varepsilon}\left(\frac{1}{(1+\varepsilon)\sigma_d(\matA)^2 + \lambda}\right)^2\|\matA\v\|_2^2 \\
   \]
\end{proof}

We use the following lemma in the proof of Theorem~\ref{supp-thm:noisycur_guarantee}.

\begin{lem}
\label{supp-lem:perturbed_bottom_sing_val}
  Let $\matC \in \R^{m \times d}$, where $d \leq m$, have rank $r$, and let the conformal matrix $\matE$ comprise i.i.d. $\mathcal{N}(0, \sigma^2)$ entries,
  then 
  \[
    \sigma_d(\matC + \matE)^2 \geq \left(\frac{1}{2}\sqrt{m-r} - \sqrt{d}\right)^2 \sigma^2
  \]
  with probability at least $1 - \exp(\tfrac{-(m-r)\delta^2}{2})$.
\end{lem}

\begin{proof}
  Observe that
  \begin{align}
    \label{supp-eqn:bottom_sing_val_estimate}
    \sigma_d(\matC + \matE)^2 & = \min_{\|\x\|_2 = 1} \|(\matC + \matE) \x\|_2^2 = \min_{\|\x\|_2=1} \|(\matC + \matP_{\matC}\matE)\x\|_2^2 + \|(\matI - \matP_{\matC}) \matE\x\|_2^2\notag \\
    & \geq \min_{\|\x\|_2=1} \|(\matI - \matP_{\matC}) \matE \x\|_2^2 \notag \\
    & = \min_{\|x\|_2=1} \|\matQ^T \matE \x\|_2^2 = \sigma_{\text{min}}(\matQ^T \matE)^2,
  \end{align}
   where $\matQ \in \R^{m \times (m-r)}$ is an orthonormal basis for the kernel of $\matP_{\matC}$. Since $\matQ^T \matE$ comprises i.i.d. $\mathcal{N}(0, \sigma^2)$ entries, standard results
   on the singular values of Gaussian matrices give that
   \[
     \sigma_{\text{min}}(\matQ^T\matE)^2 \geq \left(\frac{1}{2}\sqrt{m-r} - \sqrt{d}\right)^2 \sigma^2.
   \]
   with probability at least $1 - \exp(-\tfrac{(m-r)\delta^2}{2})$~\cite[Theorem 6.1]{wainwright2019high}. The claimed result now follows from inequality \ref{supp-eqn:bottom_sing_val_estimate}.
\end{proof}

\bibliographystyle{splncs04}
\bibliography{twocost}